\newtheorem{theorem}{Theorem}
\newtheorem{corollary}{Corollary}
\newtheorem{lemma}{Lemma}
\newtheorem{assumption}{Assumption}
\newtheorem{remark}{Remark}
\newtheorem{problem}{Problem}
\newcommand{\redtext}[1]{{\color{black}#1}}
\newcommand{\bmh}[1]{\hat{\bm{#1}}} 
\newcommand{\bmdh}[1]{\dot{\hat{\bm{#1}}}}
\newcommand{\bmd}[1]{\dot{\bm{#1}}} 
\newcommand{\bmdd}[1]{\ddot{\bm{#1}}}
\newcommand{\bracketmat}[2]{ \left[ \begin{array}{#1} #2 \end{array} \right] }
\newcommand{\myvar}[1]{\bm{#1}}
\newcommand{\myvardot}[1]{\dot{\myvar{#1}}}
\newcommand{\apvar}[1]{\bmh{#1}}
\newcommand{\apvardot}[1]{\bmdh{#1}}
\newcommand{\apmat}[1]{\hat{#1}}
\begin{document}

\title{
Robust Object Manipulation for Tactile-based Blind Grasping
}

\author[melb]{Wenceslao Shaw-Cortez\corref{cor1}}
\ead{shaww@student.unimelb.edu.au}

\author[melb]{Denny Oetomo \corref{cor2}}
\ead{doetomo@unimelb.edu.au}

\author[melb]{Chris Manzie}
\ead{manziec@unimelb.edu.au}

\author[vin]{Peter Choong}
\ead{pchoong@unimelb.edu.au}

\cortext[cor1]{Corresponding author}

\address[melb]{School of Electrical, Mechanical, and Infrastructure Engineering, The University of Melbourne, 3010, VIC, Australia}

\address[vin]{Department of Surgery, St. Vincent's Hospital, 3065, VIC, Australia}

\begin{abstract}

Tactile-based blind grasping addresses realistic robotic grasping in which the hand only has access to proprioceptive and tactile sensors. The robotic hand has no prior knowledge of the object/grasp properties, such as object weight, inertia, and shape. There exists no manipulation controller that rigorously guarantees object manipulation in such a setting. Here, a robust control law is proposed for object manipulation in tactile-based blind grasping. The analysis ensures semi-global asymptotic and exponential stability in the presence of model uncertainties and external disturbances that are neglected in related work. Simulation and hardware results validate the effectiveness of the proposed approach.

\end{abstract}

\begin{keyword}
Dexterous Manipulation \sep Robotic Grasping \sep Robust Control \sep Multi-fingered Hands \sep Tactile Sensing
\end{keyword}

\maketitle

\section{Introduction}

Object manipulation via robotic hands is an ability that has been pursued for decades. One specific type of object manipulation is in-hand manipulation in which the object is translated and/or rotated within a grasp, and is only in contact with the fingertips. In-hand manipulation, as opposed to static grasping, requires more precise control of the robotic hand to apply the appropriate contact forces to move the object. In addition to moving the object, the robotic hand is responsible for ensuring the object stays within the grasp without slipping or losing contact with the fingertips. All of this must be accomplished despite the effects of rolling, inertial/Coriolis effects, and external disturbances that interplay the hand and object relationship.  

In addition to the complexities inherent in object manipulation, this work focuses on in-hand manipulation for unstructured environments where the robotic hand is deployed in the real world. As such, the manipulation controller only has access to variables that can be measured or observed by the sensors on the robotic hand. Vision-based sensors, for example, cannot provide the object center of mass, but can track the object relative pose \cite{Garcia2009}. However most vision-based methods, in addition to those that require object pose and velocity, require markers to be placed on the object prior to grasping \cite{Jara2014, Ueki2008}, which makes them highly impractical  in unstructured environments, and restricts them to laboratory settings. Needless to say, humans are able to manipulate objects using only proprioceptive and tactile sensing, and are thus not dependent on vision for manipulation tasks. This motivates the definition of \textit{tactile-based blind grasping} that will be used throughout this work. In tactile-based blind grasping, the robotic hand only has access to proprioceptive and tactile sensors \cite{Cutkosky2016, Kappassov2015} that can be physically integrated into the robotic hand for use in unstructured environments. 

Although tactile information can be used to obtain valuable grasp information, there is still significant object model uncertainty regarding the object center of mass, shape, inertia, weight, friction, etc. When robots are deployed in the real world, it is unreasonable to assume intimate knowledge of every object that needs to be grasped, including its shape, inertia, center of mass, and/or possible wrench disturbances that can act on it.  Additionally, uncertainties in the hand model, including kinematic and dynamic models of the hand, are always present in practice as the model never truly matches the physical system. Thus in-hand manipulation for tactile-based blind grasping is synonymous with robust in-hand manipulation that must account for rolling, inertial/Coriolis effects, and external disturbances, while handling uncertainties in the hand-object model and sensor measurements.

\subsection*{Related Work}

In-hand manipulation requires an intial grasp to be made a priori. Existing work has provided robust means of controlling end effector positions and performing grasp synthesis \cite{Zanchettin2017,Hang2016,Lippiello2013a}. Thus the focus here is on in-hand manipulation control once a grasp has been formed. Initial work for in-hand manipulation proposed methodologies for modeling the hand-object system, as well as analyzing properties of the grasp \cite{Kerr1986, Cole1989, Cutkosky1989, Murray1994}. However due to the complexity in controlling robotic hands for in-hand manipulation,  different types of solutions have emerged. Some existing solutions include grasp force optimization \cite{Bicchi2000a,Buss1996,Fungtammasan2012} and motion planning \cite{ Cherif1998, Saut2011,Hertkorn2013}. Those approaches typically assume properties such as object weight, contact friction, and center of mass are known a priori and that the hand-object system is quasi-static. Despite the novelty in those solutions, their inherent assumptions are overly constraining for tactile-based blind grasping. Furthermore, the quasi-static assumption used in much of the literature ignores the dynamics of the system, which is critical for manipulation tasks. That assumption simplifies the problem, but at the expense of not guaranteeing stability of the hand-object system.  

Most early manipulation controllers were heavily reliant on the hand-object model. Computed torque methods \cite{Cole1989} and feedback linearization \cite{Sarkar1997} have been applied to object manipulation, but those methods required exact knowledge of the object model and state, and were not robust to model uncertainties that occur in practice. One exception reported in \cite{Jen1996} presents a sliding mode controller in which the control input is the contact force, however that control did not consider finger nor contact dynamics/kinematics to apply the necessary contact forces.

Adaptive control methods have also been developed to handle model uncertainties in grasping/manipulation. In \cite{Cheah1998} an adaptive PD control law was proposed to compensate for uncertain gravity, contact locations, and hand kinematics. A trajectory tracking adaptive control law was presented in \cite{Ueki2008} that guarantees asymptotic stability. That controller was then extended to consider compressible fingertips, and a robust/adaptive controller was proposed, which guarantees uniformly ultimate bounded tracking error \cite{Ueki2011}. Although those adaptive control methods provide robustness to model uncertainty, they all require measurements of the object pose that are not available in tactile-based blind grasping. Other robust methods also require object pose measurements \cite{Jara2014},  neglect rolling \cite{Fan2017,Li2010}, and external disturbances \cite{Caldas2015} which makes them unsuitable for tactile-based blind grasping. 

Manipulation controllers were also developed using passivity-based analysis. Initial work in passivity-based analysis for object manipulation was restricted to planar grasping/manipulation \cite{Arimoto2000, Doulgeri2002, Ozawa2004, Arimoto2008,Grammatikopoulou2014}, which was extended in \cite{Arimoto2006} to cuboid objects, but only for grasp stability. A passive, spatial manipulation controller was then developed in \cite{Wimboeck2006}, which was referred to as the intrinsically passive controller. The authors defined a virtual frame to replace the model of the object. That control was then extended to consider internal dynamics \cite{Wimbock2008}, and a comparison of various types of passivity-based controllers was presented in \cite{Wimbock2012}. The notion of the virtual frame was extended in \cite{Tahara2010} into a stabilizing control law for cuboid shaped objects that only required joint angle and joint velocity measurements, which was referred to as the blind grasping controller.  That controller was extended to more general polyhedral objects in \cite{Kawamura2013}. However those passivity-based methods require conservative assumptions that the object is weightless and polyhedral \cite{Kawamura2013, Tahara2010}, or neglect rolling effects and elements of the hand-object dynamics  \cite{Wimbock2012,Wimbock2008}. Furthermore those approaches do not consider consider hand kinematic uncertainties that result from implementation.

Although the literature has progressed to develop manipulation controllers that do not require object state information, a truly robust control method has yet to be developed for tactile-based blind grasping. The early control techniques required exact models of the hand-object system, with no consideration of model uncertainties. The more robust methods, such as adaptive control, relied on object pose and velocity measurements that are not available in tactile-based blind grasping. All the passivity-related control methods for spatial manipulation assume an exact model of the hand kinematics and ignore elements of the system dynamics to achieve stability results. Some are restricted to only polyhedral objects. In practice, objects vary in mass/inertia, shape, surface curvature, and all objects have mass. The system dynamics including rolling, Coriolis terms, and gravity/external disturbances are inherent in the dynamics of object manipulation, and cannot be ignored. To date, there exists no control method that guarantees asymptotic stability for in-hand manipulation that addresses rolling, external disturbances, Coriolis terms, and uncertainties in the hand-object model for tactile-based blind grasping. 

This paper extends upon \cite{ShawCortez2016, ShawCortez2017} to develop a robust in-hand manipulation controller  for tactile-based blind grasping. The main contribution of this paper is the development, and associated stability guarantees, of an in-hand manipulation controller that is robust to the model uncertainties which include object mass/inertia, shape, Coriolis terms, hand/object kinematics, and external wrenches all subject to the effects of rolling contacts. The results presented here provide semi-global asymptotic/exponential stability of the system with appropriate tuning guidelines. The proposed control is also extended to vision-based methods and existing disturbance compensators for improved manipulation performance and wider applicability of the proposed approach. Note the stability guarantees presented here satisfy the stability-related assumptions made in related work \cite{ShawCortez2018b}, which then addresses slip and sampling time effects.

\subsection*{Notation}

Throughout this paper, an indexed vector $\myvar{v}_i \in \mathbb{R}^p$ has an associated concatenated vector $\myvar{v} \in \mathbb{R}^{pn}$, where the index $i$ is specifically used to index over the $n$ contact points in the grasp. The notation $\myvar{v}^{\mathcal{\mathcal{E}}}$ indicates that the vector $\myvar{v}$ is written with respect to a frame $\mathcal{E}$, and if there is no explicit frame defined, $\myvar{v}$ is written with respect to the inertial frame, $\mathcal{P}$. The operator $(\cdot)\times$ denotes the skew-symmetric matrix representation of the cross-product. $SO(3)$ denotes the special orthogonal group of dimension 3. The approximation of $\myvar{v}$ is denoted $\apvar{v}$. The minimum and maximum eigenvalues of a positive-definite matrix, $B$, are respectively denoted by $\lambda_{\text{min}}(B)$, and $\lambda_{\text{max}}(B)$. The kernel or null-space of a matrix, $B$, is denoted by $\text{Ker}(B)$. The Moore-Penrose generalized inverse of $B$ is denoted $B^\dagger$. The $n\times n$ identity matrix is denoted $I_{n\times n}$.

\section{System Model} \label{sec:system model}

\subsection{Hand-Object System}

Consider a fully-actuated, multi-fingered hand grasping a rigid, convex object at $n$ contact points.  Each finger consists of $m_i$ joints with smooth, convex fingertips of high stiffness. Let the finger joint configuration be described by the joint angles, $\myvar{q}_i \in \mathbb{R}^{m_i}$. The full hand configuration is defined by the joint angle vector, $\myvar{q} = (\myvar{q}_1, \myvar{q}_2, ..., \myvar{q}_n)^T \in \mathbb{R}^m$, where $m = \sum_{i=1}^n m_i$ is the total number of joints. Let the inertial frame, $\mathcal{P}$, be fixed on the palm of the hand, and a fingertip frame, $\mathcal{F}_i$, fixed at the point $\myvar{p}_{f_i} \in \mathbb{R}^3$. The contact frame, $\mathcal{C}_i$, is located at the contact point, $\myvar{p}_{c_i} \in \mathbb{R}^3$.  A visual representation of the contact geometry for the $i$th finger is shown in Figure \ref{fig.contactpic}. Note a fixed point on the fingertip surface is defined by $\myvar{p}_{ft_i} \in \mathbb{R}^3$, which is fixed with respect to $\mathcal{F}_i$. The inertial position of this fixed point is $\myvar{p}_{t_i} = \myvar{p}_{f_i} + \myvar{p}_{ft_i}$.

\begin{figure}[hbtp]
\centering
\includegraphics[scale=0.37]{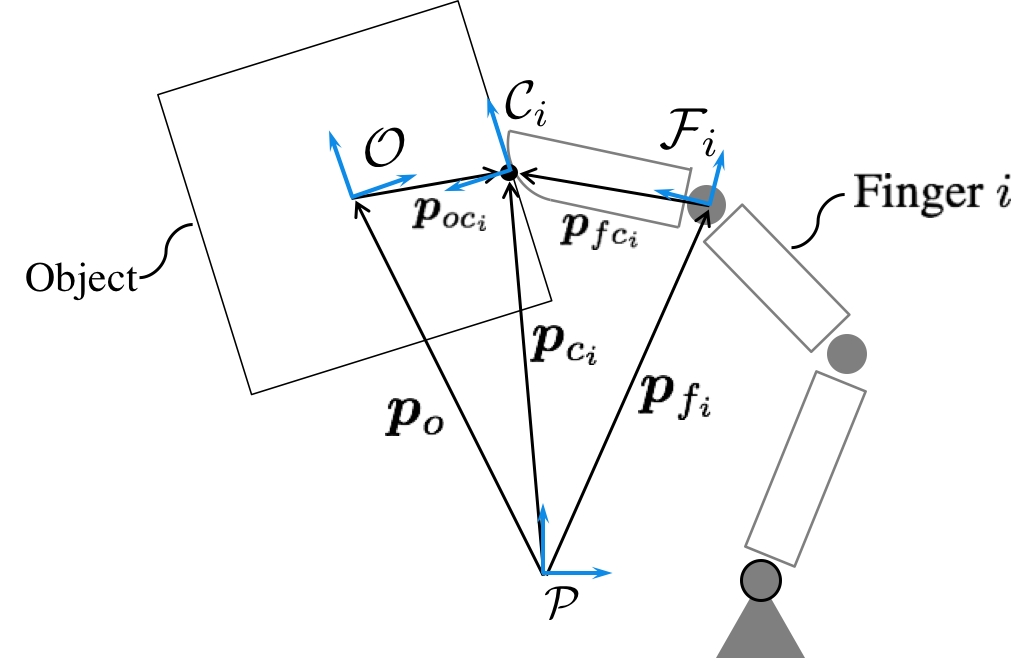}
\caption{A visual representation of the contact geometry for contact $i$. }  \label{fig.contactpic}
\end{figure}

The hand Jacobian, $J_h := J_h(\myvar{q},\myvar{p}_{fc}) \in \mathbb{R}^{3n \times m}$ defines the kinematics of the hand. The full Jacobian, $J_h$ is constructed by combining each $J_{h_i}(\myvar{q}_i,\myvar{p}_{fc_i})\in \mathbb{R}^{3 \times m_i} $ into a block diagonal matrix \cite{Cole1989}:
\begin{equation} \label{eq:hand Jacobian}
J_{h_i}(\myvar{q}_i,\myvar{p}_{fc_i}) = \left[ \begin{array}{c c} I_{3\times 3} & -(\myvar{p}_{fc_i})\times \end{array} \right] J_{s_i}(\myvar{q}_i)
\end{equation}
where $\myvar{p}_{fc_i} \in \mathbb{R}^3$ is the vector from $\mathcal{F}_i$ to $\mathcal{C}_i$, and $J_{s_i}(\myvar{q}_i) \in \mathbb{R}^{6 \times m_i}$ is the manipulator Jacobian relating $\myvardot{q}_i$ with the translational and rotational velocities about $\myvar{p}_{f_i}$ (see Figure \ref{fig.contactpic}). 

Let $\mathcal{O}$ be a reference frame fixed at the object center of mass $\myvar{p}_o \in \mathbb{R}^3$, and $R_{po} \in  SO(3)$ is the rotation matrix, which maps from $\mathcal{O}$ to $\mathcal{P}$. The angular velocity of the object frame with respect to $\mathcal{P}$ is $\myvar{\omega}_o \in \mathbb{R}^3$. The object pose is defined by $\myvar{x}_o \in \mathbb{R}^6$, with $\myvardot{x}_o = (\myvardot{p}_o, \myvar{\omega}_o)$.  The position vector from the object center of mass to the respective contact point is $\myvar{p}_{oc_i} \in \mathbb{R}^3$ (see Figure \ref{fig.contactpic}) . 

Each fingertip exerts a contact force, $\myvar{f}_{c_i} \in \mathbb{R}^3$, on the object at the contact point, $\myvar{p}_{c_i} \in \mathbb{R}^3$. Let the matrix $G_i(\myvar{p}_{oc_i}) \in \mathbf{R}^{6 \times 3}$ be the map from the contact force, $\myvar{f}_{c_i}$, to the corresponding wrench acting on the object. The transpose, $G_i(\myvar{p}_{oc_i})^T$, maps the object motion to the velocity of the $i$th contact point. Using a point contact with friction model, $G_i(\myvar{p}_{oc_i})^T$ can be computed by \cite{Cole1989}:
\begin{equation}  \label{eq:grasp map}
G_i^T(\myvar{p}_{oc_i})  = \left[ \begin{array}{c c} I_{3\times 3} & -(\myvar{p}_{oc_i})\times \end{array} \right] 
\end{equation}
The grasp map, $G := G(\myvar{p}_{oc}) \in \mathbb{R}^{6 \times 3n}$ maps the contact force vector, $\myvar{f}_c$, to the net object wrench, and is defined by:
\begin{equation}\label{eq:full grasp map}
 G(\myvar{p}_{oc})= [ G_1, G_2, ..., G_n]
 \end{equation} 
 
 The hand and object kinematics are related by the following nonholonomic grasp constraint \cite{Murray1994}:
\begin{equation} \label{eq:grasp constraint}
J_h \dot{\myvar{q}} = G^T \myvardot{x}_o
\end{equation}

The following assumptions are made for the hand and object:

\begin{assumption}\label{asm:full rank Jh}
The hand has $m = 3n$ joints, and never reaches a singular configuration.
\end{assumption}

\begin{remark}\label{rm:full rank Jh}
Assumption \ref{asm:full rank Jh} ensures $J_h$ is square and invertible, which is a common assumption in related work \cite{Caldas2015, Wimbock2012}. This assumption is made in order to not distract from the main contribution of the paper and can be relaxed by considering internal motion as will be discussed later.
\end{remark}

\begin{assumption}\label{asm:full rank G}
The multi-fingered grasp has $n>2$ contact points, which are non-collinear.
\end{assumption}

\begin{remark} \label{rm:full rank G}
Assumption \ref{asm:full rank G} ensures $G$ is full rank \cite{Cole1989}. Note, this requires that a grasp is already formed for a manipulation task. The motivation behind such a requirement is to propose a low-level control framework as part of a hierarchical grasping architecture such as \cite{Hang2016,Lippiello2013a}.
\end{remark}

\begin{assumption}\label{asm:slip prevention}
The fingertips can roll on the contact surface with the object, but do not slip or lose contact.
\end{assumption}

\begin{remark}\label{rm:slip prevention}
Assumption \ref{asm:slip prevention} ensures \eqref{eq:grasp constraint} is always satisfied.
\end{remark}

\begin{assumption}\label{asm:smooth surfaces}
The fingertip and object surfaces at the contact points are locally smooth such that the system dynamics are smooth.
\end{assumption}

Under Assumptions \ref{asm:full rank Jh} and \ref{asm:slip prevention}, the hand-object dynamics can be derived as in \cite{Murray1994}:
\begin{equation} \label{eq:handobject system}
M_{ho}\bmdd{x}_o + C_{ho} \myvardot{x}_o = GJ_h^{-T}\Big( \myvar{u}+\myvar{\tau}_e\Big) + \myvar{w}_e 
\end{equation}
with 
\begin{equation} \label{eq:Mho}
M_{ho} = M_o + GJ_h^{-T} M_h J_h^{-1}G^T,
\end{equation}
\begin{equation} \label{eq:Cho}
C_{ho} = C_o + GJ_h^{-T}\Big( C_h J_h^{-1}G^T \\
+  M_h \frac{d}{dt}[ J_h^{-1}G^T] \Big),
\end{equation}
where $M_h:= M_h(\myvar{q}) \in \mathbb{R}^{m\times m}, M_o:= M_o(\myvar{x}_o) \in \mathbb{R}^{6\times 6}$ are respectively the hand and object inertia matrices, $C_h:= C_h(\myvar{q},\myvardot{q}) \in \mathbb{R}^{m\times m}, C_o:= C_o(\myvar{x}_o,\myvardot{x}_o) \in \mathbb{R}^{6\times 6}$ are the respective hand and object Coriolis matrices, $\myvar{\tau}_e := \myvar{\tau}_e(t,\myvar{q},\myvardot{q}) \in \mathbb{R}^m$ is the sum of all dissipative and non-dissipative disturbance torques acting on the hand,  $\myvar{w}_e := \myvar{w}_e(t) \in \mathbb{R}^6$ is an external wrench disturbing the object, and  $\myvar{u} \in \mathbb{R}^m$ is the joint torque control input for a fully actuated hand. The hand-object inertia and Coriolis matrices are denoted by $M_{ho} := M_{ho}(\myvar{q},\myvar{x}_o) \in \mathbb{R}^{6\times 6}$ and $C_{ho} := C_{ho}(\myvar{q}, \myvar{x}_o, \myvardot{q}, \myvardot{x}_o) \in \mathbb{R}^{6\times 6}$, respectively. The following assumption is made for the external wrenches that may act on the system:

\begin{assumption}\label{asm:constant disturbance}
The disturbance terms, $\myvar{\tau}_e, \myvar{w}_e$ are bounded, continuously differentiable, and are constant when the hand-object system is at rest such that: 
\begin{equation}
(\myvardot{x}_o,\myvardot{q}) \equiv 0 \implies \myvardot{\tau}_e, \myvardot{w}_e = 0
\end{equation}
\end{assumption}
\begin{remark}\label{rm:constant disturbance}
Common disturbances that satisfy Assumption \ref{asm:constant disturbance} include gravity acting on both the hand and object, and viscous friction acting on the joints in the form of $-\beta \myvardot{q}$ for $\beta \in \mathbb{R}_{>0}$. 
\end{remark} 

\begin{remark}
It is important to note that for rolling contacts, $\myvar{p}_c$ is a function of the hand configuration, object configuration, and geometry of the object and fingertip surfaces. For smooth, convex surfaces there exists a smooth local bijection between the geometry of the fingertip/object surfaces and the hand-object configurations such that $\myvar{p}_c$, $J_h$ and $G$ can be expressed as functions of the hand-object state, $(\myvar{q},\myvar{x}_o)$ \cite{Murray1994}. 
\end{remark}

The following lemma ensures well-known properties of $M_{ho}$:
\begin{lemma}\label{lemma:Mho uniformly bounded}
\cite{Spong1989,Murray1994} Under Assumptions \ref{asm:full rank Jh} and \ref{asm:full rank G}, $M_{ho}$ is positive definite, uniformly bounded such that there exist constants $m_{\text{min}}, m_{\text{max}} \in \mathbb{R}$ that satisfy:
\begin{equation}\label{eq:Mho uniformly bounded}
0 < m_{\text{min}} \leq ||M_{ho}^{-1} || \leq m_{\text{max}}
\end{equation}

\end{lemma}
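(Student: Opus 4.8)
The plan is to prove the two norm bounds by controlling the extreme eigenvalues of the symmetric matrix $M_{ho}$, exploiting that for the spectral norm one has $\|M_{ho}^{-1}\| = 1/\lambda_{\text{min}}(M_{ho})$. First I would establish that $M_{ho}$ is symmetric and positive definite. Symmetry follows directly from \eqref{eq:Mho} together with the symmetry of the object and hand inertia matrices $M_o, M_h$. For positive definiteness, take any $v \neq 0$ and write $v^T M_{ho} v = v^T M_o v + w^T M_h w$, where $w := J_h^{-1} G^T v$; since $M_o$ is positive definite the first term is strictly positive and since $M_h$ is positive semidefinite the second is nonnegative, so $v^T M_{ho} v > 0$.

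Next I would obtain the upper bound on $\|M_{ho}^{-1}\|$, i.e. $m_{\text{max}}$. The inequality $v^T M_{ho} v \geq v^T M_o v \geq \lambda_{\text{min}}(M_o)\|v\|^2$ gives $\lambda_{\text{min}}(M_{ho}) \geq \lambda_{\text{min}}(M_o)$, hence $\|M_{ho}^{-1}\| \leq 1/\lambda_{\text{min}}(M_o) =: m_{\text{max}}$. The key observation is that $\lambda_{\text{min}}(M_o)$ is a fixed positive constant independent of configuration: writing $M_o$ in block form with blocks $m_o I_{3\times 3}$ and $R_{po}\mathcal{I}_o R_{po}^T$, the orthogonality of $R_{po}$ shows the eigenvalues are the object mass $m_o$ and the principal moments of inertia, all strictly positive and constant.

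For the lower bound $m_{\text{min}}$, I would bound $\lambda_{\text{max}}(M_{ho})$ from above and use $\|M_{ho}^{-1}\| = 1/\lambda_{\text{min}}(M_{ho}) \geq 1/\lambda_{\text{max}}(M_{ho})$. By the triangle inequality and submultiplicativity, $\lambda_{\text{max}}(M_{ho}) \leq \lambda_{\text{max}}(M_o) + \|G\|^2\|J_h^{-1}\|^2 \lambda_{\text{max}}(M_h)$, and it remains to bound each factor uniformly. The term $\lambda_{\text{max}}(M_o)$ is constant as above; $\lambda_{\text{max}}(M_h)$ is uniformly bounded for a revolute-jointed hand (a standard manipulator property, cf. \cite{Spong1989}); and $\|G\|$ is bounded because the moment arms $\myvar{p}_{oc_i}$ stay bounded for a bounded object. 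Setting $m_{\text{min}} := 1/\big(\lambda_{\text{max}}(M_o) + \|G\|^2\|J_h^{-1}\|^2\lambda_{\text{max}}(M_h)\big)$ then yields $\|M_{ho}^{-1}\| \geq m_{\text{min}} > 0$.

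The main obstacle is the uniform boundedness of $\|J_h^{-1}\|$, equivalently a uniform positive lower bound on the smallest singular value of $J_h$. This is exactly where Assumption \ref{asm:full rank Jh} enters: non-singularity guarantees $J_h^{-1}$ exists pointwise, and combined with continuity of $J_h$ in $(\myvar{q},\myvar{p}_{fc})$ over the bounded feasible configuration set one obtains a uniform bound. I would treat this boundedness as the substantive step and cite \cite{Murray1994,Spong1989} for the standard manipulator-inertia bounds, keeping the remaining eigenvalue manipulations routine.
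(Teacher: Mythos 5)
Your proposal is correct, but it is worth noting that the paper does not actually prove this lemma: it is stated as a known property and delegated entirely to \cite{Spong1989,Murray1994}. What you have written is essentially the standard argument that underlies those citations, made explicit. Your decomposition of the Rayleigh quotient $v^T M_{ho} v = v^T M_o v + w^T M_h w$ with $w = J_h^{-1}G^T v$ is the right way to see both bounds at once: the lower bound $\lambda_{\text{min}}(M_{ho}) \geq \lambda_{\text{min}}(M_o)$ is particularly clean because, as you observe, the eigenvalues of $M_o$ are the object mass and the principal moments of inertia and hence configuration-independent, so the bound $\|M_{ho}^{-1}\| \leq 1/\lambda_{\text{min}}(M_o)$ needs no knowledge of the hand at all. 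The upper eigenvalue bound via submultiplicativity is likewise standard. The one place where your argument carries a genuine burden is exactly the one you flag: the uniform bound on $\|J_h^{-1}\|$. Assumption \ref{asm:full rank Jh} as literally stated only gives pointwise invertibility along the trajectory; to extract a uniform bound you need the feasible configurations to lie in a compact set whose closure avoids singularities (compactness of the joint torus plus boundedness of $\myvar{p}_{fc}$ makes this plausible, but strictly speaking it is an interpretation of the assumption rather than a consequence of it). The same implicit compactness is what makes $\lambda_{\text{max}}(M_h)$ and $\|G\|$ uniformly bounded. Since the paper itself leans on the references for all of this, your version is, if anything, more transparent about where the uniformity actually comes from; just state the compactness interpretation of Assumption \ref{asm:full rank Jh} explicitly rather than leaving it as ``combined with continuity.''
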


The hand-object dynamics of \eqref{eq:handobject system}, as well as Assumptions \ref{asm:full rank Jh}-\ref{asm:smooth surfaces} are standard in the manipulation literature \cite{Ueki2008, Cole1989, Murray1994, Fan2017, Wimbock2012}.  Existing controllers provide object manipulation capabilities when the entire hand-object model/states are known. However in tactile-based blind grasping the lack of object information results in unknown/uncertain $M_{ho}$, $C_{ho}$, $G$, $\myvar{\tau}_e$, $\myvar{w}_e$, and even $J_h$. Furthermore the object state $\myvar{x}_o$ is not available to the on-board controller. Amongst the tactile-based blind grasping literature, Assumption \ref{asm:constant disturbance} is less conservative than existing methods which neglect external disturbances altogether \cite{Tahara2010, Kawamura2013} or neglect rolling and $C_{ho}$ terms \cite{Wimbock2008,Wimbock2012}. This presents the problem of robust object manipulation addressed in this work and formally presented in the following section.

\subsection{Problem Formulation}\label{ssec:taskframesystem}

\subsubsection*{Task Frame Definition}

In the related work, the notion of the ``virtual frame"  \cite{Wimbock2012, Tahara2010} has been used, and has been accepted for manipulation tasks \cite{Hang2016}. The idea of the ``virtual frame" is to define a pseudo object whose state is only dependent on the hand configuration. That is, the center of the pseudo object, $\myvar{p}_a \in \mathbb{R}^3$, is the centroid of the grasp fingertips, and the orientation of the pseudo object is defined by the positions of the fingertips, and represented by the rotation matrix $R_{pa} \in SO(3)$.  Thus the virtual frame is only a function of the joint angles, and is defined by:
\begin{equation}\label{eq:define pa for blind grasping}
\myvar{p}_a(\myvar{q}) = \frac{1}{n}\sum_{i=0}^n \myvar{p}_{t_i}(\myvar{q}_i)
\end{equation} 
 \begin{equation}\label{eq:define Rpa for blind grasping}
 R_{pa}(\myvar{q}) = [\myvar{\rho}_x, \myvar{\rho}_y, \myvar{\rho}_z]
 \end{equation}
 where $\myvar{\rho}_x = \myvar{\rho}_y \times \myvar{\rho}_z,
\myvar{\rho}_y = \frac{ \myvar{p}_{t_1} - \myvar{p}_{t_2} }{ ||\myvar{p}_{t_1} - \myvar{p}_{t_2} ||_2 },
\myvar{\rho}_z = \frac{ (\myvar{p}_{t_3}-\myvar{p}_{t_1})\times (\myvar{p}_{t_2}-\myvar{p}_{t_1}) }{ ||  (\myvar{p}_{t_3}-\myvar{p}_{t_1}) \times (\myvar{p}_{t_2}-\myvar{p}_{t_1}) ||_2 }$. 

The virtual frame is used here to define the task frame for a desired manipulation motion without requiring knowledge of the object model. Let $\mathcal{A}$ be the task frame located at the point $\myvar{p}_a$ with respect to $\mathcal{P}$, with rotation matrix $R_{pa}$ that maps $\mathcal{A}$ to $\mathcal{P}$.  Let  $\myvar{v}_a \in \mathbb{R}^3$ denote the velocity of $\myvar{p}_a$, and $\myvar{\omega}_a \in \mathbb{R}^3$ denote the angular velocity of frame $\mathcal{A}$ with respect to $\mathcal{P}$. 

For practical considerations, a local parameterization of $SO(3)$ is used to define a notion of orientation error by defining $\myvar{\gamma}_a \in \mathbb{R}^3$, such that $R_{pa} = R_{pa}(\myvar{\gamma}_a)$ \cite{Cole1989}. One example of such a local parameterization is:
\begin{equation}\label{eq:define ga for blind grasping}
\myvar{\gamma}_a(\myvar{q}) = \left[ \begin{array}{c}\arctan( -\myvar{\rho}_{z_2}/ \myvar{\rho}_{z_3} )\\ \sqrt{1-\myvar{\rho}_{z_1}^2} \\ \arctan(-\myvar{\rho}_{y_1}/ \myvar{\rho}_{x_1}) \end{array} \right]
\end{equation}
To incorporate this local parameterization in the kinematics, let $S(\myvar{\gamma}_a) \in \mathbb{R}^{3\times 3}$ denote the one-to-one mapping defined by:
\begin{equation}\label{eq:velocity gamma_a}
 \myvar{\omega}_a = S(\myvar{\gamma}_a) \myvardot{\gamma}_a 
\end{equation}
The matrix $S(\myvar{\gamma}_a)$ is absorbed into $P := \text{diag}(I_{3\times 3}, S(\myvar{\gamma}_a))$ such that:
\begin{equation}\label{eq:P def}
\left[ \begin{array}{c} \myvardot{p}_a \\ \myvar{\omega}_a \end{array} \right] = P \myvardot{x}
\end{equation}
It is inherently assumed that the orientation $\myvar{\gamma}_a$ does not pass through a singular configuration. 

The task frame state is $\myvar{x} = (\myvar{p}_a, \myvar{\gamma}_a) \in \mathbb{R}^6$. Finally, let $\frac{\partial \myvar{x}}{\partial \myvar{q}} \in \mathbb{R}^{6 \times 3n}$ denote the Jacobian of the task frame that maps $\myvardot{q}$ to $\myvardot{x}$. The following assumption is used in related work \cite{Wimbock2012, Tahara2010}:
\begin{assumption} \label{asm:full rank Jc}
The function $\myvar{x}(\myvar{q})$ is continuously differentiable, and $\frac{\partial \myvar{x}}{\partial \myvar{q}}$ is full rank.
\end{assumption}

\subsubsection*{Control Objective}

For set-point object manipulation, the state $\myvar{x}$ must reach a desired reference $\myvar{r} \in \mathbb{R}^6$, where $\myvardot{r}, \bmdd{r} \equiv 0$. Let $\myvar{e} = \myvar{x} - \myvar{r}$ define the error. The objective of the proposed control algorithm is to asymptotically reach $(\myvar{e},\myvardot{e}) = 0$ in the presence of uncertain disturbances. The control problem is defined as follows:
\vspace{1.5mm}
\begin{problem}\label{pr:disturbance rejection problem}
Given a hand-object system that satisfies Assumptions \ref{asm:full rank Jh}-\ref{asm:full rank Jc}, determine a control law that semi-globally satisfies:
\begin{equation} \label{eq:equilibrium condition}
 \underset{t \to \infty}{\text{lim}} (\myvar{e}(t),\myvardot{e}(t)) \to 0
\end{equation}
\end{problem}


\section{Proposed Solution}\label{sec:proposed solution}

\subsection{Proposed Controller}

The proposed control is defined as:
\begin{equation}\label{eq:proposed control}
\myvar{u} = \apmat{J}_h^T \left( (P^T \apmat{G}) ^\dagger  \myvar{u}_m +  \myvar{u}_f \right)
\end{equation}
where $\apmat{J}_h^T$ and $\apmat{G}$ are full rank approximations of $J_h$ and $G$ respectively, and $\myvar{u}_m \in \mathbb{R}^m$ is the PID-based manipulation controller:
\begin{equation}\label{eq:PID control}
\myvar{u}_m =  -K_p \myvar{e} - K_i \int_0^t \myvar{e} \ dt - K_d \myvardot{e}
\end{equation}
where $K_p,K_i, K_d \in \mathbb{R}^{6\times 6}$ are the respective proportional, integral, and derivative positive-definite gain matrices. 

The approximations $\apmat{J}_h$ and $\apmat{G}$  are solely defined as functions of the joint angles $\myvar{q}$. The approximate hand Jacobian, $\apmat{J}_h$, is a block diagonal matrix composed of $\apmat{J}_{h_i}$ for each $i$ finger:
\begin{equation}\label{eq:Jh uncertain contact and model}
\apmat{J}_{h_i}(\myvar{q}_i) = \left[ \begin{array}{c c} I_{3\times 3} & -(\myvar{p}_{ft_i})\times \end{array} \right] \apmat{J}_{s_i}(\myvar{q}_i)
\end{equation}
where $\apmat{J}_{s_i}$ refers to the approximate spatial Jacobian resulting from approximations in the link lengths and joint positions. The approximation $\apmat{G}$ is defined as:
 \begin{equation}\label{eq:uncertain G vision}
\apmat{G}(\myvar{q}) = \left[ \begin{array}{cccc} I_{3\times 3}, & ... ,& I_{3\times 3} ,\\ (\myvar{p}_{t_1}-\myvar{p}_a)\times, & ... ,& (\myvar{p}_{t_n}-\myvar{p}_a)\times \end{array} \right]
\end{equation}
It is important to emphasize that $\myvar{p}_{ft_i}$ is fixed with respect to the fingertip surface, such that $\apmat{J}_{h_i}$ is only a function of $\myvar{q}$ and is not equal to $J_h$. Similarly, $\myvar{p}_{t_i}$ and $\myvar{p}_a$ are solely dependent on $\myvar{q}$, such that $\apmat{G}$ is an erroneous approximation of $G$.  Thus the proposed control, apart from $\myvar{u}_f$, only requires the joint angles/velocities $\myvar{q}, \myvardot{q}$ and reference $\myvar{r}$. Also, the only model required by the control is the approximate hand kinematics $\apmat{J}_{s_i}(\myvar{q})$. Note however that the discrepancies between $J_h$, $\apmat{J}_h$, and $G$, $\apmat{G}$ are not neglected, and are addressed in the stability analysis presented in the following section.

The internal force control input $\myvar{u}_f \in \mathbb{R}^{3n}$ is used to control the internal forces of the grasp, and prevent slip such that Assumption \ref{asm:slip prevention} holds. One common property in related work of the internal force controller is that it is constant when the hand-object system is static \cite{Kerr1986,Bicchi2000a, Buss1996, Fungtammasan2012}. This is formalized in the following assumption:

\begin{assumption}\label{asm:internal force condition}
The internal force control satisfies:
\begin{equation}\label{eq:uf condition}
\myvardot{x} \equiv 0 \implies \myvardot{u}_f = 0
\end{equation}
\end{assumption}

The choice of an internal force controller is not unique. One possible solution for $\myvar{u}_f$ is the internal force control law:
\begin{equation}\label{eq:centroid uf}
\myvar{u}_f(\myvar{q}) = k_f (\myvar{p}_a - \myvar{p}_{t_1}, \myvar{p}_a - \myvar{p}_{t_2}, ..., \myvar{p}_a - \myvar{p}_{t_n})^T
\end{equation}
where $k_f \in \mathbb{R}_{>0}$ is a scalar gain \cite{Kawamura2013, Wimbock2012, Bae2012}. Note that if \eqref{eq:centroid uf} is used to define $\bm{u}_f$, then the proposed control \eqref{eq:proposed control}, \eqref{eq:PID control} is a \textit{blind grasping} controller in that it only requires proprioceptive measurements of $\bm{q}, \bmd{q}$ in addition to the reference $\myvar{r}$. However, the internal force defined by \eqref{eq:centroid uf} does not guarantee Assumption \ref{asm:slip prevention} holds generally.

A systematic way of defining $\bm{u}_f$ to ensure Assumption \ref{asm:slip prevention} (no slip) holds is via a technique known as grasp force optimization \cite{Bicchi2000a, Buss1996, Fungtammasan2012}. In grasp force optimization, the generalized inverse from \eqref{eq:proposed control} is re-written as a quadratic program:
\begin{equation}\label{eq:proposed control in gfo}
\myvar{u} = \apmat{J}_h^T \myvar{u}_{fc}^*,
\end{equation}
\begin{align} \label{eq:gfo control}
\begin{split}
\myvar{u}_{fc}^* \hspace{0.1cm} = \hspace{0.1cm} & \underset{\myvar{v}}{\text{argmin}}
\hspace{.3cm} \myvar{v}^T\myvar{v} \\
& \hspace{.01cm} \text{s.t.} 
\hspace{0.4cm} \apmat{G} \myvar{v} = \myvar{u}_m \\ 
\end{split}
\end{align}
Then, linearized friction cone constraints are incorporated into \eqref{eq:gfo control} to ensure Assumption \ref{asm:slip prevention} is satisfied. The cost is usually used to minimize internal forces or actuator effort. Thus in grasp force optimization, the internal force controller is incorprated in the optimization to deal with these additional constraints and grasp redundancies. However \eqref{eq:proposed control in gfo} typically requires exact knowledge of the object model. Grasp force optimization has since been extended to tactile-based blind grasping in \cite{ShawCortez2018b}. However \cite{ShawCortez2018b} is dependent on unfounded stability guarantees associated with the proposed control \eqref{eq:proposed control}, \eqref{eq:PID control}. In the following section, these stability guarantees are formally ensured.

\subsection{Stability Analysis}\label{ssec:proposed control}

The stability analysis presented here is used to identify the disturbances that arise from model uncertainties in tactile-based blind grasping, and then ensure the proposed control law is robust to such disturbances. Using knowledge of the disturbance dynamics along with structural properties of the hand-object system, the analysis shows that the proposed control can achieve semi-global asymptotic and semi-global exponential stability about the origin. 

To start, the system dynamics are derived for the state $\myvar{x}$, by deriving the relation between $\myvardot{x}$ and $\myvardot{x}_o$. From Assumptions \ref{asm:full rank Jh} and \ref{asm:slip prevention}, $\myvardot{q}$ is solved for from \eqref{eq:grasp constraint} and substituted into $\myvardot{x} = \frac{\partial \myvar{x}}{\partial \myvar{q}} \myvardot{q}$:
\begin{equation}\label{eq:relate x to xo}
\myvardot{x} =  \frac{\partial \myvar{x}}{\partial \myvar{q}} J_h^{-1} G^T \myvardot{x}_o
\end{equation} 
For ease of notation, let $J_a = \frac{\partial \myvar{x}}{\partial \myvar{q}} J_h^{-1} G^T$. Note that from Assumptions \ref{asm:full rank Jh}, \ref{asm:full rank G}, and \ref{asm:full rank Jc}, $J_a$ is square and invertible such that $\myvardot{x}_o = J_a^{-1} \myvardot{x}$. The following lemma shows that when $\myvardot{x} = 0$, then the hand-object system is at rest:

\begin{lemma}\label{lemma:no zero dynamics}
Consider the system state, $\myvar{x} = (\myvar{p}_a, \myvar{\gamma}_a)$, with $\myvar{p}_a$ and $\myvar{\gamma}_a$ respectively defined by \eqref{eq:define pa for blind grasping}, \eqref{eq:define ga for blind grasping}. Under Assumptions \ref{asm:full rank Jh}, \ref{asm:full rank G}, \ref{asm:slip prevention}, and \ref{asm:full rank Jc}, if $\myvardot{x} = 0$, then $(\myvardot{q}, \myvardot{x}_o) = 0$. 
\end{lemma}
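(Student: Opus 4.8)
The plan is to read the statement as a pointwise algebraic fact about the grasp kinematics and to chain the two velocity relations that link $\myvar{x}$, $\myvar{q}$, and $\myvar{x}_o$. Under Assumption \ref{asm:slip prevention} the rolling constraint \eqref{eq:grasp constraint}, $J_h \myvardot{q} = G^T \myvardot{x}_o$, holds at every instant, and the task-frame definition supplies $\myvardot{x} = \frac{\partial \myvar{x}}{\partial \myvar{q}} \myvardot{q}$. Eliminating $\myvardot{q}$ between these two gives the compact relation \eqref{eq:relate x to xo}, $\myvardot{x} = J_a \myvardot{x}_o$ with $J_a = \frac{\partial \myvar{x}}{\partial \myvar{q}} J_h^{-1} G^T$. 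I would prove $\myvardot{x}_o = 0$ first and $\myvardot{q} = 0$ second: this ordering is forced, since $\frac{\partial \myvar{x}}{\partial \myvar{q}} \in \mathbb{R}^{6 \times 3n}$ has a nontrivial kernel whenever $n > 2$, so $\frac{\partial \myvar{x}}{\partial \myvar{q}} \myvardot{q} = 0$ by itself cannot deliver $\myvardot{q} = 0$, and one must instead route the argument through the object velocity.

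The two steps are then short inversions. First, because $J_a$ is square and invertible under Assumptions \ref{asm:full rank Jh}, \ref{asm:full rank G}, and \ref{asm:full rank Jc} (as established immediately before the lemma), setting $\myvardot{x} = 0$ in $\myvardot{x} = J_a \myvardot{x}_o$ yields $\myvardot{x}_o = J_a^{-1} \myvardot{x} = 0$. Second, substituting $\myvardot{x}_o = 0$ into the grasp constraint \eqref{eq:grasp constraint} gives $J_h \myvardot{q} = G^T \myvardot{x}_o = 0$, and since $J_h$ is square and invertible by Assumption \ref{asm:full rank Jh}, left-multiplying by $J_h^{-1}$ forces $\myvardot{q} = 0$. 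Together these give $(\myvardot{q}, \myvardot{x}_o) = 0$, which is the claim.

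The only genuinely nontrivial ingredient is the invertibility of $J_a$, and since it is already asserted in the text preceding the lemma I would simply cite it; all remaining work is two one-line inversions. If instead a self-contained justification of that invertibility were required, it would be the main obstacle, and I would approach it geometrically: $G^T$ has full column rank $6$ by Assumption \ref{asm:full rank G}, and left-multiplication by the invertible $J_h^{-1}$ preserves injectivity, so $J_h^{-1} G^T$ maps $\mathbb{R}^6$ isomorphically onto a $6$-dimensional subspace $V \subset \mathbb{R}^{3n}$ of kinematically admissible joint velocities. The composite $J_a$ is then nonsingular precisely when $\frac{\partial \myvar{x}}{\partial \myvar{q}}$ is injective on $V$, i.e. when $V \cap \text{Ker}(\frac{\partial \myvar{x}}{\partial \myvar{q}}) = \{0\}$; this transversality condition is what the full-rank hypothesis of Assumption \ref{asm:full rank Jc}, combined with the grasp-geometry assumptions, is intended to guarantee, and verifying it for the specific virtual-frame parameterization \eqref{eq:define pa for blind grasping}--\eqref{eq:define ga for blind grasping} would be the sole place requiring care.
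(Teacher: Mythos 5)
Your proof is correct and follows the same route as the paper's: invert $J_a$ in \eqref{eq:relate x to xo} to obtain $\myvardot{x}_o = 0$, then invert $J_h$ in the grasp constraint \eqref{eq:grasp constraint} to obtain $\myvardot{q} = 0$. You are in fact more explicit than the paper, which merely asserts that the conclusion ``directly follows'' from the full-rank conditions; your observation that the argument must pass through $\myvardot{x}_o$ first --- because $\frac{\partial \myvar{x}}{\partial \myvar{q}} \in \mathbb{R}^{6\times 3n}$ has a nontrivial kernel for $n>2$, so $\myvardot{x} = \frac{\partial \myvar{x}}{\partial \myvar{q}}\myvardot{q} = 0$ alone cannot force $\myvardot{q}=0$ --- makes precise a step the paper leaves implicit.
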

\begin{proof}
From Assumption \ref{asm:slip prevention}, the relation between $\myvardot{x}$ and $\myvardot{x}_o$ follows from \eqref{eq:relate x to xo}, and that of $\myvardot{x}$ and $\myvardot{q}$ follows from $\myvardot{x} = \frac{\partial \myvar{x}}{\partial \myvar{q}} \myvardot{q}$. Due to the full rank conditions from Assumptions \ref{asm:full rank Jh}, \ref{asm:full rank G}, and \ref{asm:full rank Jc}, it directly follows that $\myvardot{x} =0 \implies (\myvardot{q}, \myvardot{x}_o) = 0$. 
\end{proof}

To derive the dynamics for $\bm{x}$, $\myvardot{x}_o = J_a^{-1} \myvardot{x}$ is differentiated as follows:
\begin{equation}\label{eq:xddot to xaddot}
\bmdd{x}_o = \frac{d}{dt}[J_a^{-1}] \myvardot{x} + J_a^{-1} \bmdd{x}
\end{equation} 

Substitution of \eqref{eq:xddot to xaddot} into \eqref{eq:handobject system}, left multiplication by $J_a^{-T}$, and trivial change of variables from $\myvar{x}$ to $\myvar{e}$ results in the following system dynamics of similar form to \cite{Wimbock2012}:
\begin{equation} \label{eq:hand contact system}
M_a \bmdd{e} + C_a \myvardot{e} =   J_a ^{-T} G J_h^{-T} ( \myvar{u} +\myvar{\tau}_e) +  J_a^{-T}\myvar{w}_e 
\end{equation}
where
\begin{equation}\label{eq:Ma}
M_a = J_a^{-T} M_{ho} J_a^{-1}
\end{equation}
\begin{equation}
C_a = J_a ^{-T} M_{ho} \frac{d}{dt}[J_a^{-1}]  + J_a^{-T} C_{ho} J_a^{-1}
\end{equation}

It is straightforward to see that the inertia matrix $M_a$ is positive definite and ultimately bounded due to the original properties of $M_{ho}$ and the full rank conditions of $J_a$, $J_h$, and $G$. This is summarized in the following lemma:
\begin{lemma}\label{lemma:Ma}
Under Assumptions \ref{asm:full rank Jh}, \ref{asm:full rank G}, and \ref{asm:full rank Jc}, $M_a$ defined by \eqref{eq:Ma} is positive definite, and uniformly bounded.
\end{lemma}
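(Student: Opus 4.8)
The plan is to treat $M_a = J_a^{-T} M_{ho} J_a^{-1}$ as a congruence transformation of $M_{ho}$ by the invertible matrix $J_a^{-1}$, so that $M_a$ inherits symmetry and positive definiteness directly from $M_{ho}$, while its eigenvalue (equivalently norm) bounds follow by combining the positive definiteness and uniform boundedness of $M_{ho}$ from Lemma \ref{lemma:Mho uniformly bounded} with uniform bounds on $J_a$ and $J_a^{-1}$. The first ingredient is that $J_a = \frac{\partial \myvar{x}}{\partial \myvar{q}} J_h^{-1} G^T$ is square and invertible: Assumption \ref{asm:full rank Jh} makes $J_h$ square and nonsingular, Assumption \ref{asm:full rank G} makes $G$ full rank, and Assumption \ref{asm:full rank Jc} makes $\frac{\partial \myvar{x}}{\partial \myvar{q}}$ full rank, so the product is full rank and hence $J_a^{-1}$ exists.

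Next I would establish symmetry and positive definiteness. Since $M_{ho}$ is symmetric, $M_a^T = J_a^{-T} M_{ho}^T J_a^{-1} = M_a$. For positive definiteness, take any $\myvar{v} \neq 0$ and set $\myvar{w} = J_a^{-1} \myvar{v}$, which is nonzero because $J_a^{-1}$ is invertible; then $\myvar{v}^T M_a \myvar{v} = \myvar{w}^T M_{ho} \myvar{w} > 0$ by positive definiteness of $M_{ho}$ (Lemma \ref{lemma:Mho uniformly bounded}). Hence $M_a \succ 0$.

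For uniform boundedness I would bound the extreme eigenvalues of $M_a$ using submultiplicativity of the induced norm. Writing $M_a^{-1} = J_a M_{ho}^{-1} J_a^T$ gives $\|M_a^{-1}\| \leq \|J_a\|^2 \|M_{ho}^{-1}\| \leq m_{\text{max}} \|J_a\|^2$, and the relation $\|J_a^{-1}\myvar{v}\| \geq \|\myvar{v}\|/\|J_a\|$ yields $\lambda_{\text{min}}(M_a) \geq \lambda_{\text{min}}(M_{ho})/\|J_a\|^2$; symmetrically, $\lambda_{\text{max}}(M_a) \leq \lambda_{\text{max}}(M_{ho}) \|J_a^{-1}\|^2$. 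Chaining these with the bounds on $M_{ho}$ produces constants $0 < a_{\text{min}} \leq \|M_a^{-1}\| \leq a_{\text{max}}$, provided $\|J_a\|$ and $\|J_a^{-1}\|$ are uniformly bounded.

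That last provision is the main obstacle, since all three factors of $J_a$ are configuration-dependent. I would dispose of it in exactly the spirit that underlies the uniform boundedness of $M_{ho}$ already asserted in Lemma \ref{lemma:Mho uniformly bounded}: under Assumption \ref{asm:full rank Jh} the hand never approaches a singular configuration, so $\|J_h^{-1}\|$ stays bounded; Assumptions \ref{asm:full rank G} and \ref{asm:full rank Jc} keep $G$ and $\frac{\partial \myvar{x}}{\partial \myvar{q}}$ full rank, so their norms and relevant inverses remain finite; and since the intended analysis is semi-global, the state is confined to a compact set on which these continuous maps and their inverses attain finite uniform bounds. Consequently $J_a$ and $J_a^{-1}$ are uniformly bounded, which closes the argument and establishes that $M_a$ is positive definite and uniformly bounded.
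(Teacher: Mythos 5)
Your proof is correct and follows essentially the same route as the paper, which simply asserts that positive definiteness and uniform boundedness of $M_a$ follow from the properties of $M_{ho}$ (Lemma \ref{lemma:Mho uniformly bounded}) and the full-rank conditions on $J_h$, $G$, and $\frac{\partial \myvar{x}}{\partial \myvar{q}}$; you have merely filled in the congruence-transformation and norm-bounding details, including an honest acknowledgement that the uniform bounds on $J_a$ and $J_a^{-1}$ rest on the non-singularity assumptions and the semi-global (compact-set) setting.
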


Next, the proposed control law \eqref{eq:proposed control} is substituted into the system dynamics \eqref{eq:hand contact system}:
\begin{equation}\label{eq:hand contact system ver 2}
M_a \bmdd{e} + C_a \myvardot{e} =   J_a ^{-T} G J_h^{-T} \apmat{J}_h^T \left( \apmat{G}^\dagger P^{-T} \myvar{u}_m + \myvar{u}_f \right) + J_a ^{-T} G J_h^{-T} \myvar{\tau}_e+  J_a^{-T}\myvar{w}_e
\end{equation}

It is important to note that the system dynamics \eqref{eq:hand contact system ver 2} are not dependent on assumptions that neglect rolling, Coriolis terms, or external disturbances, which are required in related work \cite{Wimbock2012, Kawamura2013}. Furthermore, from \eqref{eq:hand contact system ver 2}, it is clear that the model uncertainties in the hand and object kinematics contribute to additional disturbances. 

To proceed, the \textit{approximate} matrices from \eqref{eq:proposed control} are multiplied out with their true inverses as shown in \eqref{eq:hand contact system ver 2}. Note the matrix $P$ from the control \eqref{eq:proposed control} is used as an approximation of $J_a^{-1}$. The motivation behind this is that $P \myvardot{x}$ defines the translational and angular velocities of the task frame, which ideally should equal the object velocity, $\myvardot{x}_o$, such that $P \myvardot{x} \approx \myvardot{x}_o = J_a^{-1} \myvardot{x}$. This relation is only an approximation due to the effects of rolling between the object and fingertips. The result of the product of matrices and approximate inverses is as follows:
\begin{equation}\label{eq:hand contact system ver 2.5}
M_a \bmdd{e} + C_a \myvardot{e} =  \myvar{u}_m + J_a ^{-T} G J_h^{-T} \myvar{\tau}_e+  J_a^{-T}\myvar{w}_e \\
+ D_1 \myvar{u}_m + D_2 \myvar{u}_f 
\end{equation}
where $D_1 := D_1(\myvar{x},\myvar{q},\myvar{x}_o) \in \mathbb{R}^{6\times 6}$, $D_2 := D_2(\myvar{x},\myvar{q},\myvar{x}_o) \in \mathbb{R}^{6\times 3n}$ represent the residual matrices that arise from the approximations of $J_h$ and $G$ multiplying their respective true inverses.

The previous derivation of the system dynamics, along with the proposed control, shows that object manipulation for tactile-based blind grasping resembles an Euler-Lagrange system with disturbance terms. To address robustness to these disturbances, it must first be shown that the origin is an equilibrium point of the system. This satisfies the conditions for related work  \cite{Alvarez-Ramirez2000} where singular perturbation analysis is used to guarantee semi-global stability properties. 

In the following derivation, the system dynamics are re-written in a similar singular perturbation structure and show that the origin is the equilibrium point of the system, and thus the desired stability results follow. Note the following derivation is in-line with the method used in \cite{Alvarez-Ramirez2000}. The novelty of the presentation here is in showing the properties of the disturbance in tactile-based blind grasping, which are notably different than those of \cite{Alvarez-Ramirez2000}.

Let $\myvar{\psi} \in \mathbb{R}^6$ denote the cumulative disturbance including the Coriolis and centrifugal terms:
\begin{equation}\label{eq:define psi}
\myvar{\psi} = -C_a \myvardot{e} + J_a ^{-T} G J_h^{-T} \myvar{\tau}_e+  J_a^{-T}\myvar{w}_e + D_1 \myvar{u}_m + D_2 \myvar{u}_f 
\end{equation}
Furthermore, let $\myvar{\eta} \in \mathbb{R}^6$ denote the full system nonlinearities defined by:
\begin{equation}\label{eq:define eta}
\myvar{\eta} = M_a^{-1} \myvar{\psi} + (M_a^{-1} - \apmat{M}^{-1}) \myvar{u}_m
\end{equation}
where $\apmat{M} \in \mathbb{R}^{6\times 6}$ is a constant, positive definite matrix that approximates $M_a$ and satisfies:
\begin{equation}\label{eq:Mbar condition}
|| I_{6\times 6} - M_a^{-1} \apmat{M} || < 1
\end{equation}
\begin{remark}\label{rm:Mbar condition}
Lemma \ref{lemma:Ma} guarantees the existence of such an $\apmat{M}$ that satisfies \eqref{eq:Mbar condition}. An acceptable choice is $\apmat{M} = \frac{2}{m_{\text{max}}} I_{6\times 6}$ for $m_{\text{max}} \in \mathbb{R}_{>0}$ chosen sufficiently large to exceed the bounded norm on $M_a$. Note a less conservative $\apmat{M}$ can be defined as a diagonal matrix whose elements are upper bounds of the diagonal elements of $M_a$ \cite{Alvarez-Ramirez2000}.
\end{remark}

The system dynamics \eqref{eq:hand contact system ver 2.5} is re-written using \eqref{eq:define psi} and \eqref{eq:define eta}
\begin{equation}\label{eq:hand contact system ver 3}
\bmdd{e} = \apmat{M}^{-1} \myvar{u}_m + \myvar{\eta}
\end{equation}

To introduce the time-scale separation for the desired singularly perturbed structure, the PID controller \eqref{eq:PID control} is re-written as the following equivalent control law as per Proposition 1 of \cite{Alvarez-Ramirez2000}:
\begin{equation}\label{eq:FL PID control}
\myvar{u}_m = \apmat{M} ( - \apvar{\eta} - K_1 \myvar{e} - K_2 \myvardot{e} )
\end{equation}

\begin{equation}\label{eq:define eta estimation}
\apvar{\eta} = \frac{1}{\varepsilon	} ( \myvar{w} + \myvardot{e} )
\end{equation}

\begin{equation}\label{eq:eta estimation dynamics}
\myvardot{w} = - \apmat{M}^{-1} \myvar{u}_m - \frac{1}{\varepsilon} (\myvar{w} + \myvardot{e}), \ \myvar{w}(0) = -\myvardot{e}(0)
\end{equation}
where $K_1, K_2 \in \mathbb{R}^{6\times 6}$ are positive definite matrices, $\varepsilon \in \mathbb{R}_{>0}$ is the singular perturbation parameter, and $\apvar{\eta} \in \mathbb{R}^{6\times 1}$ is an estimation of $\myvar{\eta}$, whose update law is defined by \eqref{eq:define eta estimation}, \eqref{eq:eta estimation dynamics}.

Substitution of \eqref{eq:FL PID control} into \eqref{eq:hand contact system ver 3} results in:
\begin{equation}\label{eq:hand contact system ver 4}
\bmdd{e} = -K_1 \myvar{e} - K_2 \myvardot{e} + (\myvar{\eta} - \apvar{\eta})
\end{equation}

In \eqref{eq:hand contact system ver 4}, the system dynamics are now separated into linear and nonlinear terms. For stability, the nonlinear component (i.e. the error between $\myvar{\eta}$ and $\apvar{\eta}$) must be shown to converge to zero. To do so, a new state is introduced to define this error:  $\myvar{y} = \myvar{\eta} - \apvar{\eta}$. The dynamics of $\myvar{y}$ are derived by differentiation of $\apvar{\eta}$ and $\myvar{\eta}$. Differentiation of \eqref{eq:define eta estimation}  with substitutions from \eqref{eq:eta estimation dynamics} and \eqref{eq:hand contact system ver 3} results in: $\dot{\apvar{\eta}} = \frac{1}{\varepsilon} \myvar{y}$. Differentiation of \eqref{eq:define eta}, whose derivation is omitted for brevity, results in:

\begin{equation}\label{eq:eta differentiation}
\myvardot{\eta} = -\frac{1}{\varepsilon} (M_a^{-1} \apmat{M} - I_{6\times 6}) \myvar{y} +\myvar{\phi}
\end{equation}

\begin{equation}\label{eq:define phi}
\myvar{\phi} = M_a^{-1} \Big( \dot{M}_a ( K_1 \myvar{e} + K_2 \myvardot{e} - \myvar{y} ) - (\apmat{M} - M_a) ( K_1 \myvar{e} + K_2 \myvardot{e} + K_2 \myvar{y}) + \myvardot{\psi} \Big)
\end{equation}
where $ \frac{\partial M_a}{\partial \myvardot{e}} \in \mathbb{R}^{6\times 6}$ denotes the tensor for the partial derivative of $M_a$ with respect to $\myvardot{e}$. 

Finally, the system dynamics for tactile-based blind grasping \eqref{eq:hand contact system} is re-written in the following singularly perturbed form by combining \eqref{eq:hand contact system ver 4} with $\myvardot{y} = \frac{1}{\varepsilon} \myvar{y} + \myvardot{\eta}$ and \eqref{eq:eta differentiation}:
\begin{subequations} \label{eq:hand contact system singularly perturbed}
\begin{equation}\label{subeq:reduced system}
\bracketmat{c}{\myvardot{e} \\ \bmdd{e}} = \bracketmat{cc}{ 0_{6\times 6} & I_{6\times 6} \\ -K_1 & -K2} \bracketmat{c}{\myvar{e} \\ \myvardot{e}} + \bracketmat{c}{0 \\ \myvar{y}}
\end{equation}
\begin{equation}\label{subeq:boundary layer system}
\varepsilon \myvardot{y} = - M_a^{-1} \apmat{M} \myvar{y} + \varepsilon \myvar{\phi}
\end{equation}
\end{subequations}

The following lemma ensures that the system origin, $(\myvar{e}, \myvardot{e}, \myvar{y}) = 0$, is an equilibrium point:

\begin{lemma}\label{lemma:psi is constant at origin}
Consider the system defined by \eqref{eq:hand contact system singularly perturbed}. Under Assumptions \ref{asm:full rank Jh}-\ref{asm:internal force condition}, $\myvardot{\psi} = 0$ at the origin $(\myvar{e}, \myvardot{e}, \myvar{y}) = 0$, such that the origin is an equilibrium point of the system.
\end{lemma}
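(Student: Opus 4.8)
The plan is to verify directly that the origin $(\myvar{e},\myvardot{e},\myvar{y})=0$ annihilates the right-hand sides of both \eqref{subeq:reduced system} and \eqref{subeq:boundary layer system}, and to show that this reduces to the single claim $\myvardot{\psi}=0$. From the reduced dynamics \eqref{subeq:reduced system}, setting $\myvar{e}=\myvardot{e}=\myvar{y}=0$ gives $\bmdd{e}=0$ immediately. From the boundary-layer dynamics \eqref{subeq:boundary layer system}, $\myvar{y}=0$ leaves $\varepsilon\myvardot{y}=\varepsilon\myvar{\phi}$, so $\myvardot{y}=\myvar{\phi}$ at the origin. Inspecting \eqref{eq:define phi}, every summand carries a factor of $\myvar{e}$, $\myvardot{e}$, or $\myvar{y}$ except the term $M_a^{-1}\myvardot{\psi}$; hence $\myvar{\phi}=M_a^{-1}\myvardot{\psi}$ at the origin, and since $M_a^{-1}$ is nonsingular by Lemma~\ref{lemma:Ma}, the entire statement follows once $\myvardot{\psi}=0$ is established there.

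The key enabling fact I would record first is that the origin corresponds to the hand--object system being entirely at rest. Since $\myvardot{r}\equiv0$ we have $\myvardot{e}=\myvardot{x}$, so $\myvardot{e}=0$ gives $\myvardot{x}=0$, and Lemma~\ref{lemma:no zero dynamics} then yields $(\myvardot{q},\myvardot{x}_o)=0$. Because every matrix appearing in $\myvar{\psi}$ --- namely $C_a$, $J_a$, $G$, $J_h$, $D_1$, and $D_2$ --- is a smooth function of the configuration $(\myvar{q},\myvar{x}_o)$ alone (by the construction of $J_h$, $G$ and the approximations \eqref{eq:Jh uncertain contact and model}--\eqref{eq:uncertain G vision}), their total time derivatives are linear in $(\myvardot{q},\myvardot{x}_o)$ and therefore vanish at the origin.

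With this in hand, I would differentiate \eqref{eq:define psi} by the product rule and dispatch the resulting terms in three groups. First, the terms $-\dot{C}_a\myvardot{e}-C_a\bmdd{e}$ vanish because $\myvardot{e}=0$ and, as shown above, $\bmdd{e}=0$. Second, the matrix-derivative contributions $\frac{d}{dt}[J_a^{-T}GJ_h^{-T}]\myvar{\tau}_e$, $\frac{d}{dt}[J_a^{-T}]\myvar{w}_e$, $\dot{D}_1\myvar{u}_m$, and $\dot{D}_2\myvar{u}_f$ vanish by the rest condition, irrespective of the (possibly nonzero) values of $\myvar{\tau}_e,\myvar{w}_e,\myvar{u}_m,\myvar{u}_f$. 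Third, the remaining terms $J_a^{-T}GJ_h^{-T}\myvardot{\tau}_e$, $J_a^{-T}\myvardot{w}_e$, $D_1\myvardot{u}_m$, and $D_2\myvardot{u}_f$ vanish term by term: Assumption~\ref{asm:constant disturbance} gives $\myvardot{\tau}_e=\myvardot{w}_e=0$ at rest; differentiating \eqref{eq:PID control} gives $\myvardot{u}_m=-K_p\myvardot{e}-K_i\myvar{e}-K_d\bmdd{e}=0$; and Assumption~\ref{asm:internal force condition} gives $\myvardot{u}_f=0$ since $\myvardot{x}=0$. Collecting these establishes $\myvardot{\psi}=0$, hence $\myvardot{y}=\myvar{\phi}=0$, and together with $\bmdd{e}=0$ this shows the origin is an equilibrium point.

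I expect the only delicate point to be the product-rule bookkeeping in the third group: the disturbance wrenches and the control inputs themselves are generally nonzero at the origin --- indeed the PID integral term must supply a nonzero static effort to balance persistent disturbances such as gravity --- so the vanishing of $\myvardot{\psi}$ cannot come from those factors being zero. The argument must instead lean precisely on the structural hypotheses that the disturbance derivatives (Assumption~\ref{asm:constant disturbance}) and the internal-force derivative (Assumption~\ref{asm:internal force condition}) vanish at rest, paired with the rest-induced vanishing of every configuration-dependent matrix derivative. Making this separation explicit, and confirming that no hidden explicit time dependence survives in the matrix factors, is the crux of the proof.
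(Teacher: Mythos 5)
Your proposal is correct and follows essentially the same route as the paper's proof: establish via Lemma~\ref{lemma:no zero dynamics} that the origin puts the hand--object system at rest, kill the configuration-dependent matrix derivatives because they are linear in $(\myvardot{q},\myvardot{x}_o)$, and dispatch the remaining terms using $\myvardot{e}=\bmdd{e}=0$, $\myvardot{u}_m=0$, and Assumptions~\ref{asm:constant disturbance} and~\ref{asm:internal force condition}. Your explicit identification of $\myvar{\phi}=M_a^{-1}\myvardot{\psi}$ at the origin is a slightly more careful rendering of what the paper dismisses ``by inspection,'' but the argument is the same.
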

\begin{proof}
By Assumption \ref{asm:smooth surfaces}, $\myvardot{\psi}$ is well-defined and so differentiation of $\myvar{\psi}$ defined by \eqref{eq:define psi} results in:
\begin{multline}\label{eq:psi dot}
\myvardot{\psi} = -\frac{d}{dt}[ C_a] \myvardot{e} - C_a \bmdd{e} + \frac{d}{dt}[ J_a^{-T} G J_h^{-T} ] \myvar{\tau}_e + J_a^{-T} G J_h^{-T} \myvardot{\tau}_e \\
+  \frac{d}{dt} [ J_a^{-T} ] \myvar{w}_e + J_a^{-T} \myvardot{w}_e + \frac{d}{dt}[D_1] \myvar{u}_m + D_1 \myvardot{u}_m + \frac{d}{dt}[ D_2] \myvar{u}_f + D_2 \myvardot{u}_f
\end{multline}
It is clear that the first term vanishes at the origin. The second term also disappears after substitution of $\bmdd{e}$ from \eqref{subeq:reduced system}. From Lemma \ref{lemma:no zero dynamics} it follows that at the origin, $\myvardot{q} = 0$, and $ \myvardot{x}_o = 0$. Thus from Assumptions \ref{asm:constant disturbance} and \ref{asm:internal force condition}, it follows that the terms containing $\myvardot{\tau}_e$, $\myvardot{w}$, and $\myvardot{u}_f$ vanish at the origin.

For the term $D_1 \myvardot{u}_m$, differentiation of $\myvar{u}_m$ defined by \eqref{eq:PID control} and evaluation at the origin results in $\myvardot{u}_m(0) = - K_d \bmdd{e}$. Substitution of $\bmdd{e}$ from \eqref{subeq:reduced system} results in $\myvardot{u}_m(0) = 0$.

For the remaining terms in \eqref{eq:psi dot}, it is important to note that $J_h, J_a$ and $G$ are functions of $\myvar{x}$, $\myvardot{q}$, and/or $\myvar{x}_o$. The derivative of these terms can be written in the following form for an arbitrary matrix-valued function $B(\myvar{x},\myvar{q}, \myvar{x}_o)$:
\begin{equation}\label{eq:matrix derivative}
\frac{d}{dt}[B(\myvar{x},\myvar{q}, \myvar{x}_o)] = \sum_{j = 1}^6 \frac{\partial B}{ \partial \myvar{x}_j} \myvardot{x}_j + \frac{\partial B}{ \partial \myvar{x}_{o_j}} \myvardot{x}_{o_j} + \sum_{l=1}^{m} \frac{\partial B}{\partial \myvar{q}_l} \myvardot{q}_l 
\end{equation}
Using Lemma \ref{lemma:no zero dynamics} with Assumptions \ref{asm:full rank Jh}, \ref{asm:full rank G}, and \ref{asm:full rank Jc},  $\myvardot{e} = 0 \implies (\myvardot{x}, \myvardot{q}, \myvardot{x}_o) = 0$. Thus all terms in \eqref{eq:matrix derivative} multiplied by $\myvardot{x}, \myvardot{q}, \myvardot{x}_o$ cancel out, and the remaining terms in \eqref{eq:psi dot} vanish at the origin. Thus $\myvardot{\psi} = 0$ at the origin.

Furthermore, by inspection of $\myvar{\phi}$ defined by \eqref{eq:define phi}, it is clear that $\myvar{\phi} = 0$ at the origin. Thus the origin of \eqref{eq:hand contact system singularly perturbed} is an equilibrium point of the system.
\end{proof}

From Lemma \ref{lemma:psi is constant at origin}, the stability results from \cite{Alvarez-Ramirez2000} directly follow. This is summarized in the following theorem, which ensures semi-global asymptotic stability for object manipulation in tactile-based blind grasping:

\begin{theorem}\label{thm:Main case 1} 
Suppose Assumptions \ref{asm:full rank Jh}-\ref{asm:internal force condition} hold for a given grasp. For any $\Delta \in \mathbb{R}_{>0}$ and for all $|| (\myvar{e}(0), \myvardot{e}(0)) ||_2 < \Delta$, there exist positive definite gains $K_p^*, K_i^*, K_d^* \in \mathbb{R}^{6\times 6}$ such that for all $K_p > K_p^*, K_i > K_i^*, K_d > K_d^*$ the system \eqref{eq:hand contact system} with the control law \eqref{eq:proposed control}, \eqref{eq:PID control} is asymptotically stable.
\end{theorem}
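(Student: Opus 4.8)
The plan is to recognize that all the structural work has already been done: the closed-loop error dynamics have been cast in the standard two-time-scale form \eqref{eq:hand contact system singularly perturbed}, and Lemma \ref{lemma:psi is constant at origin} establishes that the origin $(\myvar{e},\myvardot{e},\myvar{y})=0$ is an equilibrium with the fast-dynamics forcing term $\myvar{\phi}$ vanishing there. With these in hand, the result follows by invoking the singular-perturbation stability theorem of \cite{Alvarez-Ramirez2000}, so the remaining task is simply to verify its hypotheses and then translate the small-parameter condition back into the PID gains.

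First I would examine the boundary-layer subsystem \eqref{subeq:boundary layer system}. In the fast time scale $\tau = t/\varepsilon$, with the slow variables frozen, it reduces to $\tfrac{d\myvar{y}}{d\tau} = -M_a^{-1}\apmat{M}\,\myvar{y}$. Since $M_a$ and $\apmat{M}$ are symmetric positive definite, $M_a^{-1}\apmat{M}$ is similar to a positive-definite matrix and has real positive eigenvalues; condition \eqref{eq:Mbar condition}, $\|I_{6\times 6} - M_a^{-1}\apmat{M}\|<1$, confines every eigenvalue to the open disk $|\lambda-1|<1$, hence $-M_a^{-1}\apmat{M}$ is Hurwitz and $\myvar{y}=0$ is its unique exponentially stable root. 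Lemma \ref{lemma:Ma} (positive definiteness and uniform boundedness of $M_a$) guarantees this holds uniformly in the slow states over any compact set, as the semi-global claim demands. I would exhibit a Lyapunov function of the form $V_f = \myvar{y}^T \apmat{M}\,\myvar{y}$, whose decay $\tfrac{d}{d\tau}V_f = -2\,\myvar{y}^T \apmat{M} M_a^{-1}\apmat{M}\,\myvar{y}$ is negative definite with a rate bounded below uniformly.

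Next I would treat the reduced (slow) subsystem obtained by substituting the quasi-steady state $\myvar{y}=0$ into \eqref{subeq:reduced system}, giving $\tfrac{d}{dt}(\myvar{e},\myvardot{e})^T = A\,(\myvar{e},\myvardot{e})^T$ with $A = \left[\begin{smallmatrix}0 & I_{6\times 6}\\ -K_1 & -K_2\end{smallmatrix}\right]$. Because $K_1,K_2$ are positive definite, $A$ is Hurwitz, so the slow dynamics are exponentially stable with a quadratic Lyapunov function $V_s$ solving the associated Lyapunov equation. Combining $V_s$ and $V_f$ into a composite Lyapunov function and bounding the interconnection term $\myvar{\phi}$ from \eqref{eq:define phi}---whose ingredients $\dot M_a$, $\apmat{M}-M_a$, and $\myvardot{\psi}$ are all continuous and, by Assumption \ref{asm:smooth surfaces} and Lemma \ref{lemma:psi is constant at origin}, remain bounded and vanish at the origin on any compact set---the standard composite-Lyapunov estimate yields an $\varepsilon^*(\Delta)>0$ such that for all $\varepsilon<\varepsilon^*$ the origin is asymptotically stable on the ball of radius $\Delta$. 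Finally I would map this $\varepsilon$-smallness back to the original gains via the equivalence in Proposition 1 of \cite{Alvarez-Ramirez2000}, which expresses $(K_1,K_2,\varepsilon)$ in terms of $(K_p,K_i,K_d)$: fixing the slow eigenvalues while shrinking $\varepsilon$ corresponds to increasing the gains, producing the thresholds $K_p^*,K_i^*,K_d^*$.

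The main obstacle I anticipate is the uniformity required for the semi-global statement: the bounds on $M_a^{-1}\apmat{M}$, on $\myvar{\phi}$, and on the cross-terms of the composite Lyapunov function must hold uniformly over the arbitrarily large compact set fixed by $\Delta$, with the threshold $\varepsilon^*$ legitimately depending on $\Delta$. Establishing that $\myvar{\phi}$ admits an affine-in-state bound on each such set---so that it cannot overwhelm the negative-definite terms of $\dot V$---is the delicate point, and it is precisely here that the vanishing of $\myvardot{\psi}$ at the origin (Lemma \ref{lemma:psi is constant at origin}) together with smoothness (Assumption \ref{asm:smooth surfaces}) is indispensable.
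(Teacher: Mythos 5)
Your proposal is correct and follows essentially the same route as the paper: the paper's proof simply checks that Lemma \ref{lemma:Ma} and Lemma \ref{lemma:psi is constant at origin} satisfy the hypotheses of the singular-perturbation result (Proposition 2) of \cite{Alvarez-Ramirez2000} applied to \eqref{eq:hand contact system singularly perturbed}, and then invokes it. The boundary-layer/reduced-system/composite-Lyapunov details you sketch are just an unpacking of what that cited proposition does internally, so there is no substantive difference in approach.
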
 

\begin{proof}

Lemma \ref{lemma:Ma} directly satisfies the assumption of Lemma 1 in \cite{Alvarez-Ramirez2000}. Lemma \ref{lemma:psi is constant at origin} satisfies Assumption 2 of \cite{Alvarez-Ramirez2000} such that condition (a) is true, and the origin is an equilibrium point of the closed-loop system \eqref{eq:hand contact system singularly perturbed}. Thus the proof for semi-global asymptotic stability follows from Proposition 2 of \cite{Alvarez-Ramirez2000}.
\end{proof}

Applying stronger conditions on the gain matrices lead to the following result:
\begin{corollary}\label{cor:exponential stability}
Under Assumptions \ref{asm:full rank Jh}-\ref{asm:internal force condition}, there exist positive definite gains $K_p^{**}, K_i^{**}, K_d^{**} \in \mathbb{R}^6$ such that for all $K_p > K_p^{**} \geq K_p^*, K_i > K_i^{**} \geq K_i^*, K_d > K_d^{**} \geq K_d^*$ the system \eqref{eq:hand contact system} with the control law \eqref{eq:proposed control}, \eqref{eq:PID control} is exponentially stable.
\end{corollary}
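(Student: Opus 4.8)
The plan is to invoke the exponential-stability counterpart of the singular perturbation machinery in \cite{Alvarez-Ramirez2000}, since this corollary shares the exact structure of Theorem \ref{thm:Main case 1} and differs only in the strength of the gain conditions. The singularly perturbed form \eqref{eq:hand contact system singularly perturbed} is already in place, and Lemmas \ref{lemma:Ma} and \ref{lemma:psi is constant at origin} supply the positive-definiteness/uniform boundedness of $M_a$ and the equilibrium property at the origin. The extra ingredient needed to upgrade asymptotic to exponential stability is a local Lipschitz bound on the fast-subsystem perturbation $\myvar{\phi}$ from \eqref{eq:define phi}, namely $||\myvar{\phi}|| \leq c\,(||\myvar{e}|| + ||\myvardot{e}|| + ||\myvar{y}||)$ in a neighborhood of the origin.

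First I would treat the two time scales separately. Setting $\myvar{y}=0$ in \eqref{subeq:reduced system} gives the linear slow subsystem with system matrix $\left[\begin{smallmatrix} 0 & I \\ -K_1 & -K_2 \end{smallmatrix}\right]$, which is Hurwitz whenever $K_1,K_2$ are positive definite, so it admits a quadratic Lyapunov function $V_s$ certifying exponential decay. The boundary-layer subsystem \eqref{subeq:boundary layer system}, in the limit $\varepsilon\myvardot{y} = -M_a^{-1}\apmat{M}\,\myvar{y}$, is exponentially stable because condition \eqref{eq:Mbar condition} forces $M_a^{-1}\apmat{M}$ to have spectrum in the open right half-plane uniformly in the state; taking $V_f = \tfrac{1}{2}\myvar{y}^T\myvar{y}$ yields $\dot V_f \leq -\tfrac{c_f}{\varepsilon}||\myvar{y}||^2 + \myvar{y}^T\myvar{\phi}$.

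Next I would build a composite Lyapunov function $V = V_s + \kappa V_f$ and evaluate $\dot V$ along \eqref{eq:hand contact system singularly perturbed}. The interconnection terms come from $\myvar{y}$ entering the slow dynamics \eqref{subeq:reduced system} and from $\myvar{\phi}$ entering the fast dynamics \eqref{subeq:boundary layer system}. Provided $\myvar{\phi}$ obeys the linear growth bound above, the standard two-time-scale estimate shows that for $\varepsilon$ sufficiently small $\dot V$ is negative definite and quadratic in the stacked state, delivering exponential stability on the prescribed ball $||(\myvar{e}(0),\myvardot{e}(0))||_2 < \Delta$. Enlarging the gains past the thresholds $K_p^{**}, K_i^{**}, K_d^{**}$ (taken no smaller than the asymptotic-stability thresholds $K_p^*, K_i^*, K_d^*$) increases the decay margin of the slow mode so that it dominates the perturbation coupling, which is precisely the conclusion of the exponential-stability proposition in \cite{Alvarez-Ramirez2000}.

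The hard part will be verifying the local Lipschitz bound on $\myvar{\phi}$. By Assumption \ref{asm:smooth surfaces} the matrix-valued maps $M_a, C_a, J_a, J_h, G, D_1, D_2$ are all smooth, and Lemma \ref{lemma:psi is constant at origin} shows $\myvardot{\psi}$ and $\myvar{\phi}$ vanish at the origin; smoothness together with vanishing at the origin gives, by a first-order Taylor/mean-value estimate, the desired bound linear in $(\myvar{e}, \myvardot{e}, \myvar{y})$. The delicate contribution is $\myvardot{\psi}$, whose dependence on $\myvar{\tau}_e$ and $\myvar{w}_e$ must be controlled through Assumption \ref{asm:constant disturbance}, since $\myvardot{\tau}_e, \myvardot{w}_e$ vanish only at rest. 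Because Lemma \ref{lemma:no zero dynamics} couples $(\myvardot{x}_o, \myvardot{q})$ to $\myvardot{e}$ linearly through the invertible Jacobians, these disturbance-derivative terms inherit the required linear bound, closing the argument.
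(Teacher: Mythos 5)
Your proposal is correct and follows essentially the same route as the paper: the paper's proof is a one-line appeal to Corollary 3 of \cite{Alvarez-Ramirez2000} together with Theorem 11.4 of \cite{Khalil2002}, and the latter is precisely the composite-Lyapunov, two-time-scale exponential-stability argument (Hurwitz slow subsystem, uniformly exponentially stable boundary layer via \eqref{eq:Mbar condition}, and a local linear growth bound on $\myvar{\phi}$) that you spell out. The only quibble is a wording one: under the parameterization \eqref{eq:PID gains as fl gains}, raising the gains corresponds to shrinking $\varepsilon$, which sharpens the \emph{fast} subsystem's decay rate rather than the slow mode's margin --- but your operative conclusion (``for $\varepsilon$ sufficiently small $\dot V$ is negative definite'') is the correct one.
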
 

\begin{proof}
With Lemmas \ref{lemma:Ma} and \ref{lemma:psi is constant at origin}, the proof for semi-global exponential stability follows directly from Corollary 3 of \cite{Alvarez-Ramirez2000} and Theorem 11.4 of \cite{Khalil2002}.
\end{proof}

\begin{remark}\label{remark:robustness to small perturbations}
Exponential stability provides additional robustness to the system with respect to small perturbations that can relax the constant disturbance condition from Assumption \ref{asm:constant disturbance}. Such perturbations in the grasping scenario may arise from further modeling errors associated with the point contact with friction model, rigid contact surfaces, and bounded external disturbances of small magnitudes that do not vanish at the origin. Note in the case that noise levels are sufficiently high, the use of high gain control to achieve exponential stability may be restricted.
\end{remark}

\begin{remark}
Note that the proposed controller is a blind grasping control law (i.e. where only $\myvar{q}$, $\myvardot{q}$, $\myvar{r}$ are available) when using $\bm{u}_f$ defined by \eqref{eq:centroid uf}, which is also used in related blind grasping research \cite{Tahara2010, Kawamura2013}. However in related work, the control solution causes an induced rolling disturbance, which requires additional control terms for compensation \cite{Kawamura2013}. In the proposed formulation presented here, the effect of no external information manifests as a disturbance which is similarly a function of the proposed manipulation and internal force control terms as seen in \eqref{eq:hand contact system ver 2.5}. However the proposed control neatly compensates for these disturbances without requiring any additional control terms, and furthermore rejects unknown external disturbances which are not accounted for in the related blind grasping work \cite{Kawamura2013, Tahara2010}.
\end{remark}

\begin{remark}
Assumption \ref{asm:full rank Jh} requires the hand to not be redundant to avoid any motion of the joints in the null space of $J_h$. Assumption \ref{asm:full rank Jh} in Theorem \ref{thm:Main case 1} can be relaxed to consider $m \geq 3n$  with the additional requirement that viscous friction of the form $\myvar{\tau}_e = -\beta \myvardot{q}$ for $\beta \in \mathbb{R}_{>0}$ acts on the joints. Viscous friction can be artificially introduced into the controller, but is inherent in real systems. The damping of motion in the null space due to viscous friction ensures Lemma \ref{lemma:no zero dynamics} holds, and equivalent stability results follow. Note if $m > 3n$, then the generalized inverse of $J_h$ is used in place of $J_h^{-1}$ and the proposed control \eqref{eq:proposed control}, \eqref{eq:PID control} is unchanged. This widens the applicability of the proposed method to redundant robotic hands.
\end{remark}

\subsection{Gain Tuning}\label{ssec:gain tuning}

Theorem \ref{thm:Main case 1} and Corollary \ref{cor:exponential stability} ensure the existence of PID gains to guarantee semi-global asymptotic and semi-global exponential stability, respectively, for object manipulation. In \cite{Alvarez-Ramirez2000}, a systematic tuning method was presented for PID control in which $K_p, K_i, K_d$ are parameterized by a single variable, $\varepsilon \in \mathbb{R}_{>0}$. That approach, which is adopted here, restricts the degrees of freedom in choosing the gains to facilitate the design of the controller without compromising stability. Let the $K_p, K_i, K_d$ gains be defined by:
\begin{subequations} \label{eq:PID gains as fl gains}
\begin{align}
K_p &=  \apmat{M}(K_1 + \frac{1}{\varepsilon} K_2) \\
K_i &= \frac{1}{\varepsilon} \apmat{M} K_1 \\
K_d &= \apmat{M} (K_2 + \frac{1}{\varepsilon} I_{6\times 6})
\end{align}
\end{subequations}
The structure defined in  \eqref{eq:PID gains as fl gains} facilitates the choice of each gain parameter. The gains $K_1$ and $K_2$ relate to the behavior of a linear system, and can be chosen based on the desired closed loop time constant and damping coefficient \cite{Alvarez-Ramirez2000}. The constant parameter $\apmat{M}$ can be chosen as per Remark \ref{rm:Mbar condition}. The parameter $\varepsilon$ dictates the size of the region of attraction. Once $K_1, K_2$ are defined, $\varepsilon$ is solely responsible for the system's transient response. Thus Theorem \ref{thm:Main case 1} and Corollary \ref{cor:exponential stability} can be re-stated under the restricted tuning guidelines.

\subsection{Extension to Additional Sensing Modalities}\label{ssec:additional sensing}

In much of the related work, the manipulation controllers require methods of tracking a fixed point on the object. This is typically done by either attaching sensors onto the object or using sophisticated cameras and vision systems \cite{Jara2014, Ueki2011,Fan2017}. One of the main advantages of vision is to directly measure the object pose for a given manipulation task. The use of the virtual frame, although acceptable in many applications, does not exactly track the object motion \cite{Wimbock2012}. The reason for this is due to rolling effects between the fingertip and object. Different relative curvatures between the fingertip and object will result in different object velocities for a given fingertip velocity  \cite{Murray1994}. Vision sensors bypass this issue by directly measuring object orientation error, and can be integrated into the proposed control with the same stability guarantees as follows.

In the event that object pose measurements are provided by available sensors, the proposed control can be augmented by first using the pose measurements to define the task frame. The sensors provide $\myvar{x}$ directly, where $\myvar{p}_a$ is a fixed point being tracked, while $\myvar{\gamma}_a$ is the orientation of the object about $\myvar{p}_a$ \cite{Jara2014}. From rigid body motion, it is straightforward to show  $P \myvardot{x} = \myvardot{x}_o$, which implies that $J_a = P^{-1}$. Thus the resulting control law is the same one defined by \eqref{eq:proposed control}, \eqref{eq:PID control}, albeit with a new task frame provided by the sensors. Regarding the stability analysis, the same stability results follow by setting $J_a  = P^{-1}$. 

Tactile sensors can also be incorporated into \eqref{eq:proposed control} by using $\myvar{p}_c$ in place of $\myvar{p}_t$ in $\hat{G}$ and $\hat{J}_h$. This improves the estimate of the grasp map $\apmat{G}$ and hand Jacobian $\apmat{J}_h$ to reduce the uncertainty in the system. Furthermore, as shown in the following lemma, the use of tactile sensors also relaxes Assumption \ref{asm:internal force condition}:

\begin{lemma}\label{lemma:G and Gtilde}
Consider $\hat{G}$ defined by \eqref{eq:uncertain G vision} and $G$ defined by \eqref{eq:full grasp map}, \eqref{eq:grasp map}. Suppose that $\myvar{p}_t \equiv \myvar{p}_c$. If $\bm{u}_f \in \text{Ker}(\hat{G})$, then $\bm{u}_f \in \text{Ker}(G)$.
\end{lemma}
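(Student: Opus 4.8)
The plan is to exploit that both $G$ and $\hat{G}$ consist of a ``force'' block (the stacked identity matrices) and a ``moment'' block (the stacked skew-symmetric matrices), and that the two maps differ only in the point about which moments are taken: $\hat{G}$ from \eqref{eq:uncertain G vision} references the virtual-frame center $\myvar{p}_a$, whereas $G$ from \eqref{eq:full grasp map}, \eqref{eq:grasp map} references the object center of mass $\myvar{p}_o$ (recall $\myvar{p}_{oc_i} = \myvar{p}_{c_i} - \myvar{p}_o$). Under the hypothesis $\myvar{p}_t \equiv \myvar{p}_c$, the contact points feeding both maps coincide, so the only discrepancy between $G$ and $\hat{G}$ is this reference point.

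First I would partition the internal force into its per-contact components, $\bm{u}_f = (\bm{u}_{f_1}, \ldots, \bm{u}_{f_n})$ with $\bm{u}_{f_i} \in \mathbb{R}^3$, and write $\bm{u}_f \in \text{Ker}(\hat{G})$ as the two vector equations coming from the top and bottom blocks of \eqref{eq:uncertain G vision}: the force balance $\sum_{i=1}^n \bm{u}_{f_i} = 0$, and the moment balance $\sum_{i=1}^n (\myvar{p}_{t_i} - \myvar{p}_a)\times \bm{u}_{f_i} = 0$. Next, using $\myvar{p}_{t_i} = \myvar{p}_{c_i}$ together with $\myvar{p}_{oc_i} = \myvar{p}_{c_i} - \myvar{p}_o$, I would write the target condition $\bm{u}_f \in \text{Ker}(G)$ as the \emph{same} force balance plus the moment balance $\sum_{i=1}^n (\myvar{p}_{c_i} - \myvar{p}_o)\times \bm{u}_{f_i} = 0$. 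Since the top (force) blocks of $G$ and $\hat{G}$ are literally identical, the force balance transfers for free, and only the moment blocks must be reconciled.

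The central step is then to relate the two moment blocks. Writing $\myvar{p}_{c_i} - \myvar{p}_o = (\myvar{p}_{c_i} - \myvar{p}_a) + (\myvar{p}_a - \myvar{p}_o)$ and using bilinearity of the cross product gives $\sum_i (\myvar{p}_{c_i} - \myvar{p}_o)\times \bm{u}_{f_i} = \sum_i (\myvar{p}_{c_i} - \myvar{p}_a)\times \bm{u}_{f_i} + (\myvar{p}_a - \myvar{p}_o)\times \sum_i \bm{u}_{f_i}$. The first sum vanishes by the moment balance carried by $\hat{G}$, and the second term vanishes because the force balance forces $\sum_i \bm{u}_{f_i} = 0$ (and $\myvar{p}_a - \myvar{p}_o$ factors out of the common cross product). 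Hence the $G$-moment block is zero as well, so $\hat{G}\bm{u}_f = 0$ implies $G\bm{u}_f = 0$, which is the claim.

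I do not anticipate any genuine obstacle: the result is essentially the elementary fact that the net moment of a force system is independent of the reference point whenever the net force is zero. The only care required is the bookkeeping with the skew-symmetric/cross-product identities (in particular that $G_i = [\,I_{3\times 3};\,(\myvar{p}_{oc_i})\times\,]$ follows from transposing \eqref{eq:grasp map} using skew-symmetry) and keeping clear which block enforces force balance versus moment balance.
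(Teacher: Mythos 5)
Your argument is correct and is essentially the paper's own proof: the paper writes $G = \hat{G} + \delta G$ with $\delta G$ carrying the constant block $(\myvar{p}_o - \myvar{p}_a)\times$, which is exactly your decomposition $\myvar{p}_{c_i} - \myvar{p}_o = (\myvar{p}_{c_i} - \myvar{p}_a) + (\myvar{p}_a - \myvar{p}_o)$ written in matrix form, and both proofs then kill the residual term using the force balance $\sum_i \bm{u}_{f_i} = 0$ extracted from the top block of $\hat{G}\bm{u}_f = 0$. No gaps; your reading of $G_i$ as $\bigl[\,I_{3\times 3};\,(\myvar{p}_{oc_i})\times\,\bigr]$ via skew-symmetry is also right.
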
 
\begin{proof}  
The grasp map, $G$, can be re-written with respect to $\hat{G}$ by $G = \hat{G} + \delta G$ where $\delta G$ is:
\begin{equation*}\label{eq:G and tilde-G}
\delta G = \left[ \begin{array}{c c c} 0  & \ldots & 0 \\ (\redtext{\bm{p}_o - \bm{p}_a})\times  & \ldots & (\redtext{\bm{p}_o - \bm{p}_a})\times   \end{array} \right] 
\end{equation*}
The term $G \bm{u}_f$ can now be simplified using $\bm{u}_f \in \text{Ker}(\hat{G})$ as follows:
\begin{align*}
G \bm{u}_f & = \delta G \bm{u}_f \\
& = \sum_{i=1}^k \left[ \begin{array}{c} 0_{3\times 3}  \\ (\bm{p}_o - \redtext{\bm{p}_a})\times  \end{array} \right]  \bm{u}_{f_i} \\
& = \ \left[ \begin{array}{c} 0  \\ (\bm{p}_o - \redtext{\bm{p}_a}) \times \sum_{i=1}^k \bm{u}_{f_i}  \end{array} \right]  \\
& =  0
\end{align*}
The final step, $\sum_{i=1}^k   \bm{u}_{f_i}  = 0$, is true from $\hat{G} \bm{u}_f = 0$.
\end{proof}

When tactile sensors are used such that $\myvar{p}_t \equiv \myvar{p}_c$, the uncertainty in $J_h$ is only associated with uncertainties in the manipulator Jacobians $J_{si}$. Should $\apmat{J}_{si} \equiv J_{si}$, Lemma \ref{lemma:G and Gtilde} allows for more general internal force controllers that may not be constant at $\myvar{e} = 0$. This is due to the fact that since $\myvar{u}_f \in \text{Ker}(G)$, any changes in $\myvar{u}_f$ do not affect the system dynamics.

\subsection{Existing Disturbance Compensators}\label{ssec:exogenous dist comp}

In some cases knowledge of the disturbances that act on the hand-object system may be known a priori. For example, gravity compensation is typically used to compensate for gravity disturbances on the hand, however gravity compensation requires a model of the hand that will inevitably not exactly match the physical system. Similarly other disturbance models may be known, but will not completely cancel out the external disturbance. In order to incorporate these existing compensators, the exogenous input is defined as: 

\begin{equation}\label{eq:ue for dist comp}
\myvar{u}_e = -\apvar{\tau}_e - \apmat{J}_h^{T} \apmat{G}^\dagger  \apvar{w}_e
\end{equation}
where $\apvar{\tau}_e \in \mathbb{R}^m$ and $\apvar{w}_e \in \mathbb{R}^6$ are the respective approximations of $\myvar{\tau}_e$ and $\myvar{w}_e$. 

Under the condition that $\myvardot{x} = 0 \implies \apvardot{\tau}_e, \apvardot{w}_e = 0$, and based on the previous analysis, it is clear that the superposition of $\myvar{u}$ with $\myvar{u}_e$ satisfies the conditions of Theorem \ref{thm:Main case 1}. As such, $\myvar{u}_e$ can be incorporated into the proposed control to augment the control performance with existing model knowledge. The effect of $\myvar{u}_e$ is then to reduce the uncertainty that is otherwise compensated for by the integral action of \eqref{eq:PID control}. In practice, all objects have mass and thus it is advantageous to define $\apvar{w}_e$ as $\apvar{w} = \hat{m}_o \myvar{g}$, where $\myvar{g} \in \mathbb{R}^3$ is the gravity vector and $\hat{m}_o$ is an approximation of the object mass.

Insofar, the proposed control can reject disturbances that satisfy the continuously differentiable condition as per Assumption \ref{asm:constant disturbance}. In practice, robotic hands may be subject to static friction, which is a dead-zone disturbance that does not satisfy the smoothness property of Assumption \ref{asm:constant disturbance}. With the integral action used in the proposed controller, static friction can cause integrator wind-up that compromises the stability properties of the proposed control. Thus a static friction compensation method may be required for implementation of the proposed control in robotic hands.

There exist many compensation techniques in the literature to handle static friction \cite{Mohammadi2013,Armstronghelouvry1994}. Model-based techniques directly cancel out the effects of static friction, but require significant calibration effort. One non-model-based method is a dither-based static friction compensation. In the dither method, a small sinusoidal dither signal is added to the control to periodically perturb the system. The augmentation of the dither static friction compensating control to the proposed control law is:
 \begin{equation}\label{eq:manipulation control with dither}
\myvar{u} = \apmat{J}_{h}^T \Big((P^T \apmat{G})^\dagger \myvar{u}_m + \myvar{u}_f \Big) + \myvar{d}(t)
\end{equation}
where $\myvar{d}(t) \in \mathbb{R}^m$ is a vector comprised of individual dither signals $d_j(t) \in \mathbb{R}$ for $j = 1,..,m$ defined by $d_j(t) = a_j \sin(2 \pi f t) + b_j$, where $a_j, b_j, f \in \mathbb{R}$ define the amplitude, dc offset, and frequency of the dither signals.

\begin{remark}\label{rem:dither}
The use of a dither-based static friction compensator does not satisfy the conditions for Lemma \ref{lemma:psi is constant at origin} to show that the resulting disturbances are constant at the origin. However, the use of periodic averaging \cite{Khalil2002} can be used to ensure the same stability results presented previously apply to a prescribed error bound for such dither-based compensation schemes.
\end{remark}

\section{Results} \label{sec: results}

The proposed controller provides in-hand manipulation for tactile-based blind grasping and is robust to model uncertainties. This section demonstrates the application of this robust controller through numerical simulation and hardware implementation. In the numerical simulation, the proposed control is compared to an existing controller from the literature \cite{Tahara2010} in the presence of unknown external disturbances. Additionally, results from Section \ref{ssec:gain tuning} are applied to demonstrate how the systematic gain tuning and semi-global properties presented improve the response of the system despite the robotic hand being deprived of grasp information. Finally, the proposed control is implemented on hardware to demonstrate the proposed control for tactile-based blind grasping.

\subsection{Simulation}\label{ssec:sim results}

In the simulations, the robotic hand is used to grasp a rectangular prism with different object masses, and whose center of mass, $\myvar{p}_o$, is purposefully offset from the grasp centroid, $\myvar{p}_a$. The robotic hand only has access to proprioceptive measurements $\myvar{q}$, $\myvardot{q}$, and reference $\myvar{r}$. The purpose of these results are to demonstrate the robustness of the proposed control to uncertainties in $M_{ho}$, $C_{ho}$, $\myvar{w}_e$, $\myvar{\tau}_e$, $G$, and $J_h$. The proposed control is compared to the existing control from \cite{Tahara2010}. 

The robotic hand used in the simulations is the Allegro Hand \cite{Bae2012a}, which is grasping the rectangular prism as depicted in Figure \ref{fig.initgrasp}. The Allegro Hand is a fully-actuated robotic hand, and the simulated version has 3 fingers with 4 degrees of freedom each. The fingertips are modeled as hemispheres with a radii of $0.01$m. The parameters of the hand are listed in Table \ref{table:allegro hand parameters}, and the object/simulation parameters are listed in Table \ref{table:parameters}. 

\begin{table}[h!]\hspace*{-1cm}
\centering
\caption{Allegro Hand: Model Parameters} \label{table:allegro hand parameters}
\begin{tabular}{c|ccc}    \toprule
 \textbf{Dimensions} (m) & \emph{Link 1} & \emph{Link 2} & \emph{Link 3} \\\midrule
Length (index, middle, ring)    & $0.0540$ & $0.0384$ &  $0.0250$  \\
Length (thumb)   & $ 0.0554 $ & $0.0514 $ & $0.0400$  \\
Width/Depth (all) & $0.0196$ & $0.0196$ & $0.0196$\\
\midrule
\textbf{Mass} (kg) & \emph{Link 1} & \emph{Link 2} & \emph{Link 3} \\\midrule
(index, middle, ring) & $0.0444 $ & $0.0325$ & $0.0619$  \\
 (thumb) & $0.0176$ & $0.0499$ & $0.0556$  \\
\bottomrule
\end{tabular}
\end{table}

\begin{table}[h!]\hspace*{-1cm}
\centering
\caption{Simulation Parameters} \label{table:parameters}
\begin{tabular}{c c}    \toprule
Object dimensions & $ 0.20 \ \text{m} \times 0.0354 \ \text{m} \times 0.0354 \ \text{m}$ \\
Object moment of inertia & $\text{diag}([0.0418, 0.6876,$\\ & $0.6876]) \times10^{-3} \text{kg} \text{m}^2$ \\
Initial $\myvar{p}_o$ & $(-0.0258,0.0161, 0.0978) \ \text{m}$  \\
Initial $\myvar{p}_a$ & $(0.0227, 0.0245, 0.1044) \text{m}$\\
Initial $\myvar{\gamma}_a$ & $(0,0,0) \text{rad}$\\
$\myvar{\tau}_e$ & $-.001*I_{m\times m} \myvardot{q} \ \text{Nm}$\\
$\myvar{w}_e$ & $ (0, 0, -m_o*9.81, 0, 0, 0)\ \text{N}$\\\bottomrule
\end{tabular}
\end{table}

The initial grasp shown in Figure \ref{fig.initgrasp} is purposefully offset from the object center of mass, to accentuate effects of an unknown center of mass with gravity for an unknown object mass, $m_o \in \mathbb{R}$. Note this center of mass offset introduces error between the approximate grasp map $\apmat{G}$ and true grasp map $G$. Furthermore, the lack of tactile measurements introduces error in the hand kinematic approximation $\apmat{J}_h$, and additional error in $\apmat{G}$. Object masses of $0.05$, $0.10$, and $0.20$ kg are used, which result in different $M_{ho}$, $C_{ho}$, $\myvar{w}_e$ that are unknown to the controller. 

The reference, $\myvar{r}$, is decomposed into $\myvar{r} = \myvar{x}(0) + \Delta \myvar{r}$, where $\myvar{x}(0)$ is the initial state defined by the initial task frame position and orientation, and $\Delta\myvar{r} \in \mathbb{R}^6$ is the desired reference change. The reference provided for the controllers is $\Delta\myvar{r} = (0, -.01, -.02, 0, 0, 0)$, which relates to translating the object $-.01$ m in the Y-direction, $-.02$m in the Z-direction, while maintaining the same X-position and initial orientation. The simulations were performed using Matlab's ode45 integrator, with a simulation time of 10 seconds. The dynamics of the hand and object in simulation are defined via \eqref{eq:handobject system}, and the no-slip condition is enforced with \eqref{eq:grasp constraint}.

\begin{figure}[hbtp]
\centering
	\subcaptionbox{\label{fig.initgrasptopiso}}
		{\includegraphics[scale=.2]{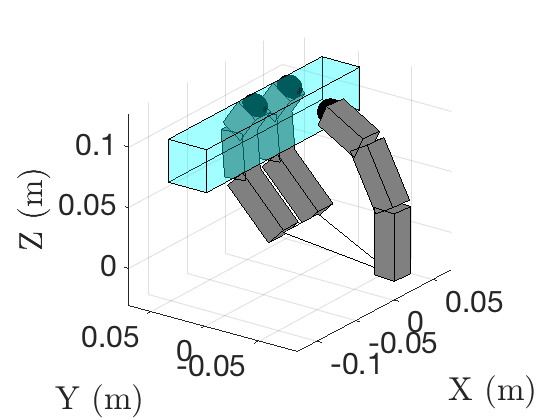}}
	\subcaptionbox{ \label{fig.initgrasptop} }
		{\includegraphics[scale=.185]{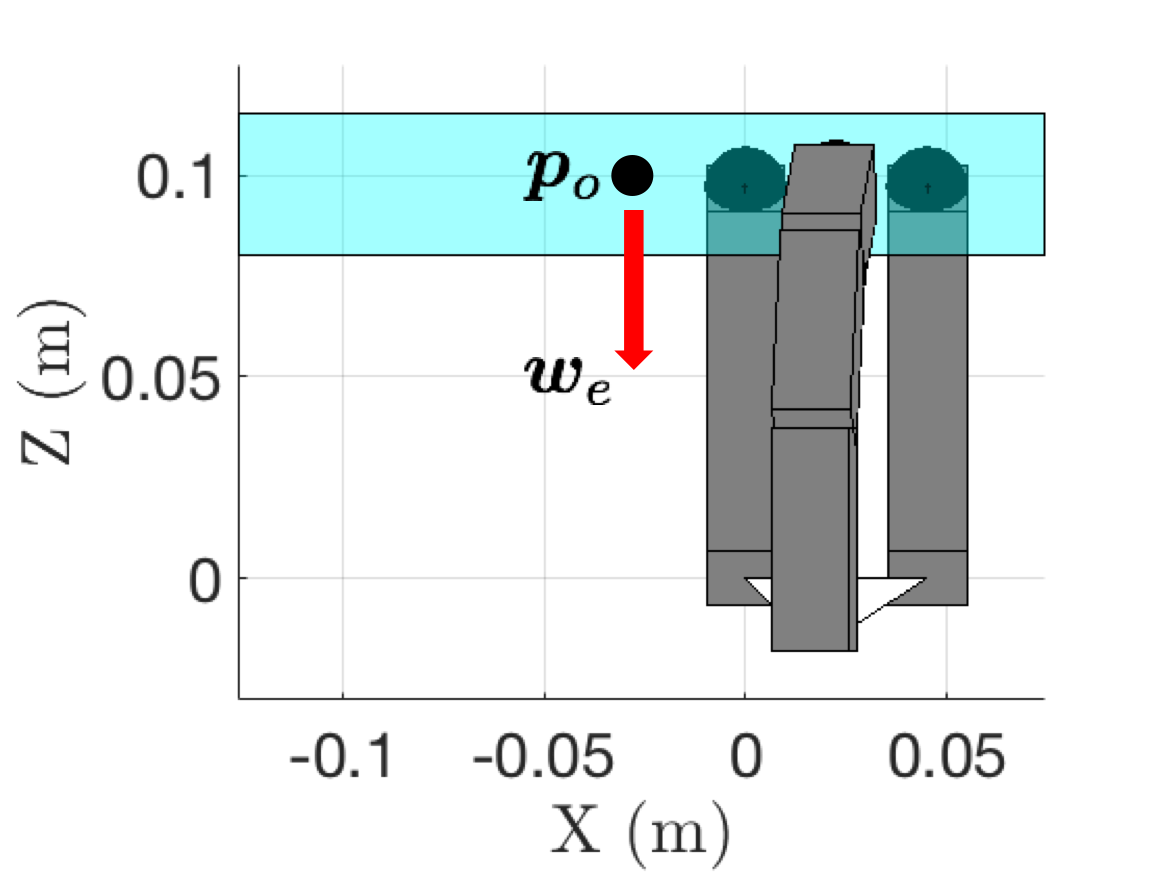}}
	\caption{Simulation setup.} \label{fig.initgrasp}
\end{figure}

The proposed control is compared to a passivity-related, conventional control defined in \cite{Tahara2010}. The proposed control \eqref{eq:proposed control}, \eqref{eq:PID control} is used with the internal force control defined by \eqref{eq:centroid uf}, which is the same internal force control used in the conventional control \cite{Tahara2010}. The gains, $K_p, K_i, K_d$  were determined by \eqref{eq:PID gains as fl gains}, with $\apmat{M} = \text{diag}([1.0, 1.0, 1.0, 0.01, 0.01, 0.01])$ , $K_1=1.0*I_{6\times 6}, K_2=2.5*I_{6\times 6}$  for $\varepsilon = 0.008$. The value of $k_f = 80$ used for both controllers was empirically chosen such that the contact points do not lose contact with the object.
   
 In each simulation, the hand-object system begins from the initial configuration depicted in Figure \ref{fig.initgrasp}. Each controller is applied at $t = 0s$ to manipulate the object to $\myvar{r}$ as the disturbances $\myvar{w}_e$, $\myvar{\tau}_e$ act on the system. The following plots show the response of the conventional and proposed controllers for different object masses. Note that in the plots, $\myvar{e}_j$ refers to the $j$th element of $\myvar{e}$.
   
   \begin{figure}[hbtp]
\centering
	\subcaptionbox{Position error. \label{fig.positionerror_bg_mo0.05} }
		{\includegraphics[scale=.2]{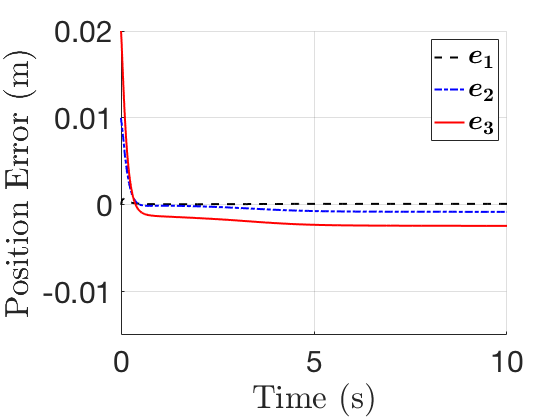}}
	\subcaptionbox{Orientation error.  \label{fig.orientationerror_bg_mo0.05} }
		{\includegraphics[scale=.2]{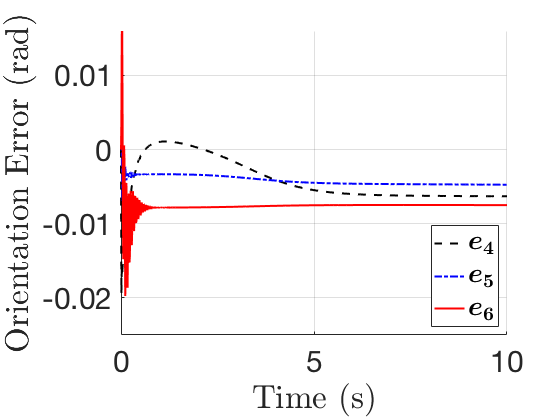}}
	\caption{Conventional control for $m_o = 0.05$ kg.} \label{fig.simulationerror_bg_mo0.05}
\end{figure}

\begin{figure}[hbtp]
\centering
	\subcaptionbox{Position error. \label{fig.positionerror_bg_mo0.1} }
		{\includegraphics[scale=.2]{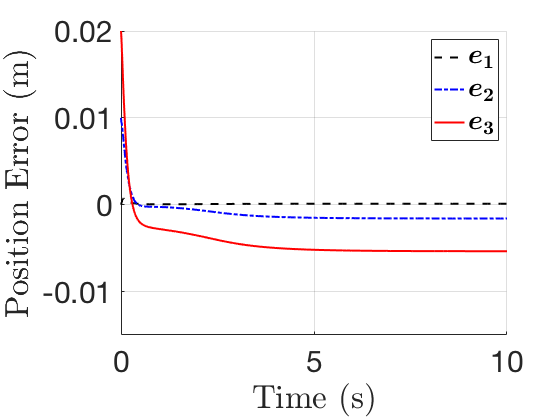}}
	\subcaptionbox{Orientation error.  \label{fig.orientationerror_bg_mo0.1} }
		{\includegraphics[scale=.2]{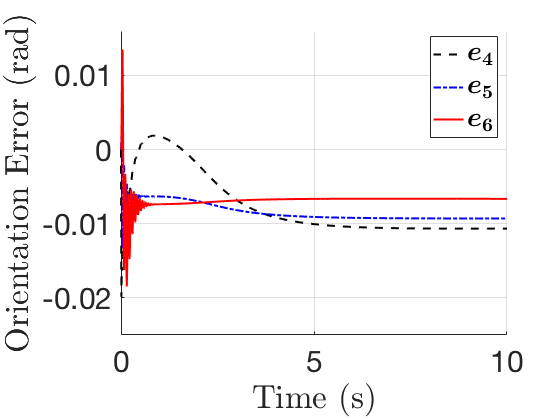}}
	\caption{Conventional control for $m_o = 0.10$ kg.} \label{fig.simulationerror_bg_mo0.1}
\end{figure}

\begin{figure}[hbtp]
\centering
	\subcaptionbox{Position error. \label{fig.positionerror_bg_mo0.2} }
		{\includegraphics[scale=.2]{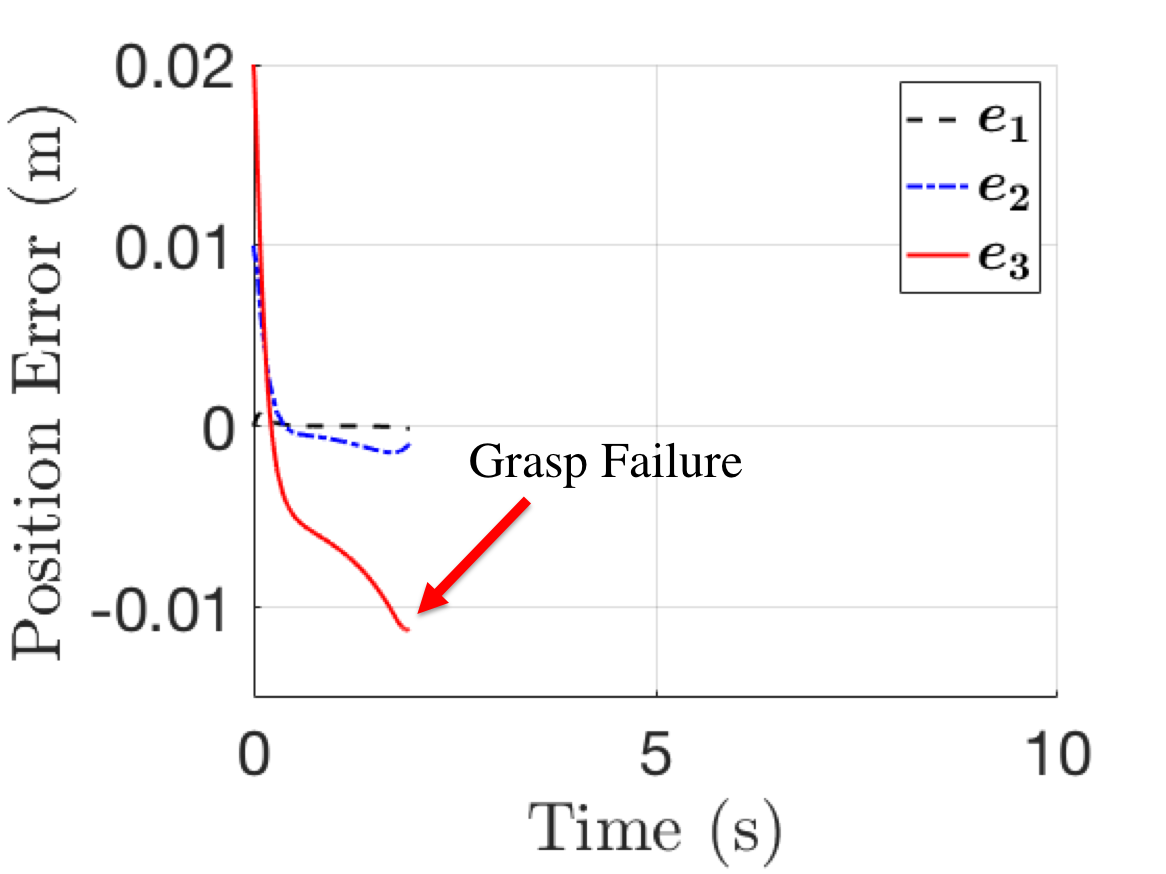}}
	\subcaptionbox{Orientation error.  \label{fig.orientationerror_bg_mo0.2} }
		{\includegraphics[scale=.2]{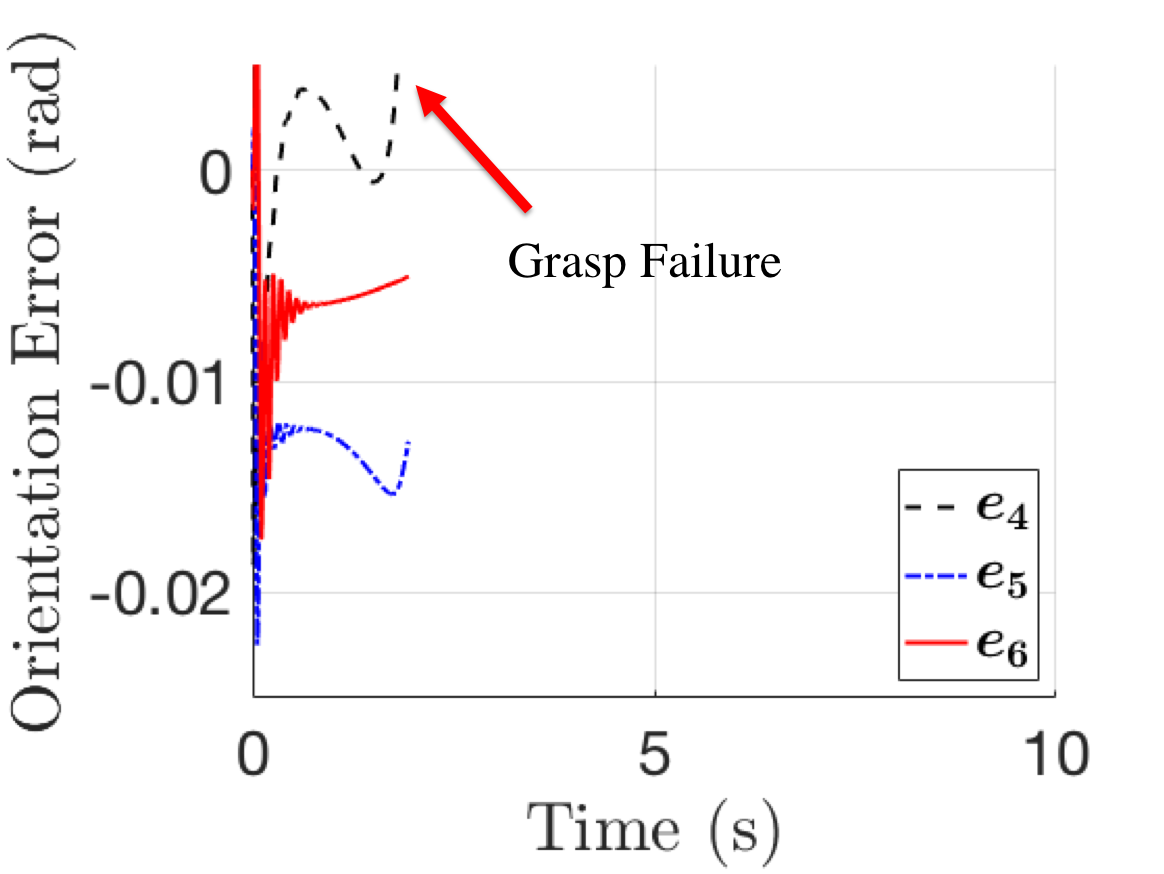}}
	\caption{Conventional control for $m_o = 0.20$ kg.} \label{fig.simulationerror_bg_mo0.2}
\end{figure}

\begin{figure}[hbtp]
\centering
	\subcaptionbox{Position error. \label{fig.positionerror_pc_mo0.05} }
		{\includegraphics[scale=.2]{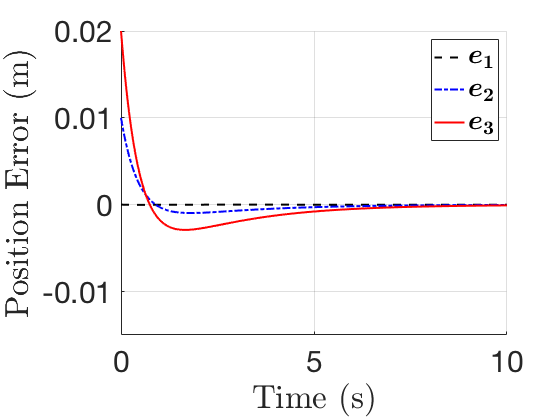}}
	\subcaptionbox{Orientation error.  \label{fig.orientationerror_pc_mo0.05} }
		{\includegraphics[scale=.2]{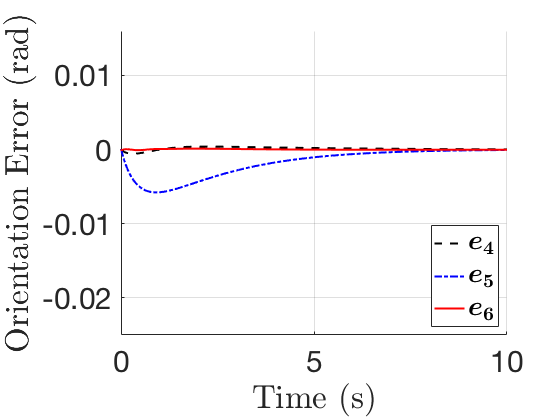}}
	\caption{Proposed control for $m_o = 0.05$ kg.} \label{fig.simulationerror_pc_mo0.05}
\end{figure}

\begin{figure}[hbtp]
\centering
	\subcaptionbox{Position error. \label{fig.positionerror_pc_mo0.1} }
		{\includegraphics[scale=.2]{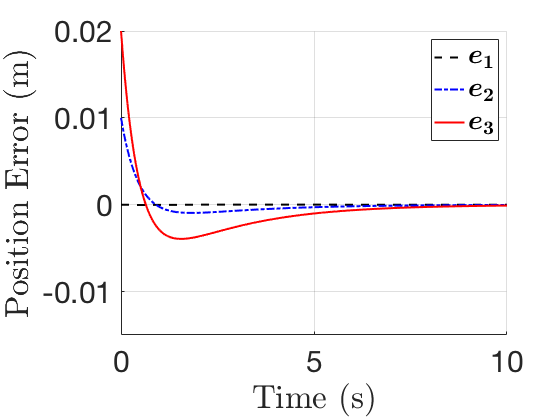}}
	\subcaptionbox{Orientation error.  \label{fig.orientationerror_pc_mo0.1} }
		{\includegraphics[scale=.2]{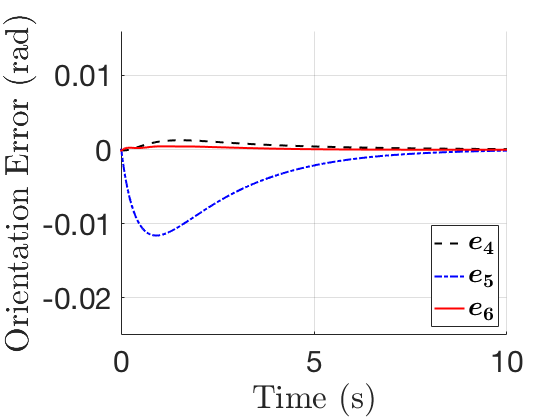}}
	\caption{Proposed control for $m_o = 0.10$ kg.} \label{fig.simulationerror_pc_mo0.1}
\end{figure}

\begin{figure}[hbtp]
\centering
	\subcaptionbox{Position error. \label{fig.positionerror_pc_mo0.2} }
		{\includegraphics[scale=.2]{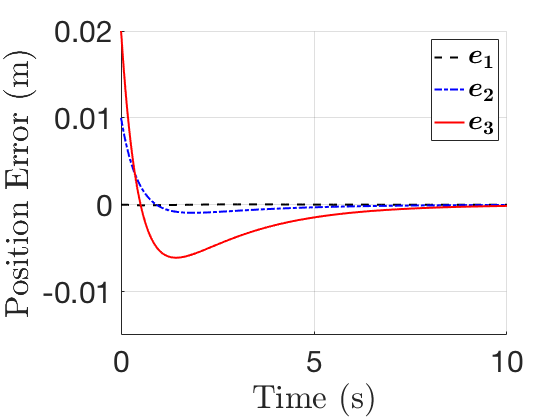}}
	\subcaptionbox{Orientation error.  \label{fig.orientationerror_pc_mo0.2} }
		{\includegraphics[scale=.2]{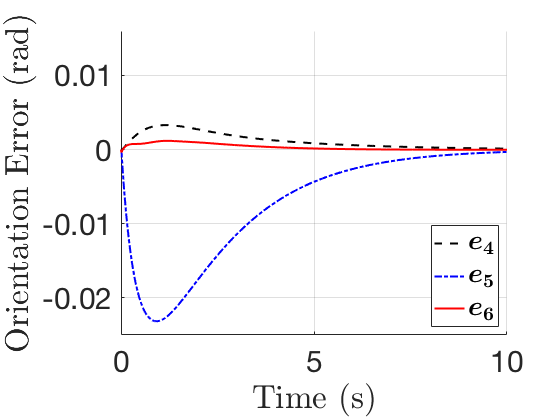}}
	\caption{Proposed control for $m_o = 0.20$ kg.} \label{fig.simulationerror_pc_mo0.2}
\end{figure}

Figures \ref{fig.simulationerror_bg_mo0.05}-\ref{fig.simulationerror_bg_mo0.2} show the responses of the conventional control law from \cite{Tahara2010}. For object masses of $0.05$ and $0.10$ kg, the controller reaches a steady-state offset of increasing magnitude for increasing object mass. This is expected as that control law neglects the effects of external disturbances, which compromises its ability to stabilize the hand-object system. For an object mass of $0.2$ kg, this issue is exacerbated as the external disturbance causes the contact points to leave the fingertip surface, causing grasp failure (i.e. the object is dropped). This is indicated by the abrupt stop in simulation at $t = 1.94$s. These results show that negligence of hand-object dynamics in tactile-based blind grasping not only results in steady-state offsets from the origin, but can ultimately result in grasp failure.

Figures \ref{fig.simulationerror_pc_mo0.05}-\ref{fig.simulationerror_pc_mo0.2} show the response of the proposed controller for the set of object masses. As expected, the proposed control law asymptotically converges to the origin despite different/unknown $M_{ho}$, $C_{ho}$, $G$, $\myvar{w}_e$, $\myvar{\tau}_e$, and uncertain $J_h$. The plots show that smaller object masses depict better transient performance due to smaller magnitude disturbance, which is aligned with intuition. The plots highlight the benefit of a robust controller that can handle unknown object masses and model uncertainty in that the gains do not need to be re-tuned despite different objects being grasped. Furthermore, with the same choice of gains, the proposed controller prevents grasp failure that occurred from the conventional control law.

Figure \ref{fig.effect of epsilon} shows how the choice of $\varepsilon$ (i.e. $K_p, K_i, K_d)$ affects the transient response of the system for an object mass of $m_o = 0.20$ kg. The figure displays the response with respect to the orientation error about the Y-axis (denoted by $\myvar{e}_5$), which is most affected by the torque disturbance of the object weight acting about the grasp centroid. However the same behavior is seen in all components of $\myvar{e}$ which are omitted here for clarity.

\begin{figure}[hbtp]
\centering
\includegraphics[scale=0.3]{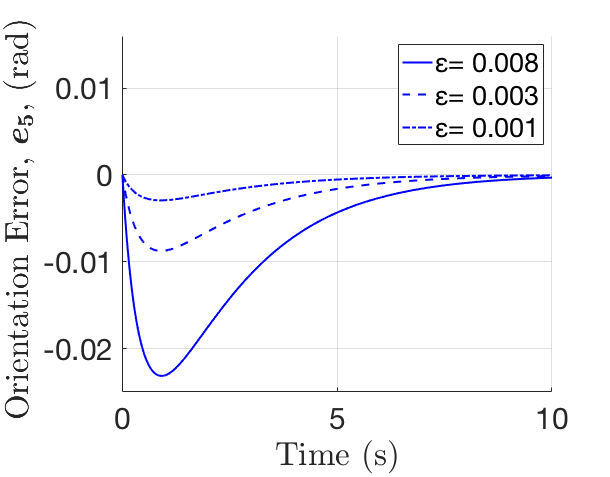}
\caption{Effect of $\varepsilon$ on transient behavior for proposed control. }  \label{fig.effect of epsilon}
\end{figure}

Figure \ref{fig.effect of epsilon} shows that for decreasing values of $\varepsilon$ (increasing $K_p, K_i, K_d$), the system has improved performance with respect to smaller overshoot and settling times. This trend is consistent with the notion from Corollary \ref{cor:exponential stability} where as $\varepsilon$ is decreased $(K_p, K_i, K_d$ increased), improved stability conditions are guaranteed in the form of asymptotic then exponential stability. This is an advantage over existing controllers \cite{Tahara2010, Wimbock2012, Kawamura2013}, in that the proposed control can achieve arbitrary control performance in the presence of external disturbances for tactile-based blind grasping. However the control performance will ultimately be limited by hardware restrictions.

 \subsection{Hardware}\label{ss:exp results}

The purpose of the hardware results is to highlight the robustness of the proposed control. This is achieved by conducting four demonstrations of the proposed control applied to objects of different mass and shape, which are unknown to the controller. In each demonstration, one parameter: object mass, object shape, or manipulation command is changed. In the first demonstration, the proposed control rotates a spherical object of mass $0.20$ kg. In the second demonstration, the same control rotates a \textit{cube} object of mass $0.20$ kg. In the third demonstration, the same control rotates a spherical object of \textit{mass $0.09$ kg}. In the final demonstration, the same control \textit{translates} a spherical object of mass $0.20$ kg. Note the same control and gain values are fixed for \textit{all} demonstrations to highlight the robustness of the proposed method to different object masses/shapes.
 
The proposed controller was applied to the Allegro Hand hardware to demonstrate the robustness properties of the proposed controller for objects of different mass and shape. The Allegro Hand setup, shown in Figure \ref{fig.AllegroHandsetup}, includes a NI USB-8473s High-Speed CAN, which operates at a fixed sampling frequency of 333 Hz. The maximum torque that can be applied by each motor is $0.65$ Nm. The Allegro Hand is fully-actuated and consists of 4 fingers each with 4 degrees of freedom. 

The inertial frame, $\mathcal{P}$, is fixed to the palm of the hand as shown in Figure \ref{fig.Allegrohandsetup1}. Let the reference with respect to $\mathcal{P}$ be $\myvar{r} = \myvar{x}(0) + \Delta \myvar{r}$, where $\myvar{x}(0)$ is the initial configuration of the task frame defined by \eqref{eq:define pa for blind grasping}, \eqref{eq:define Rpa for blind grasping}, and $\Delta \myvar{r}$ is the commanded reference change.  The change in reference is parameterized by $\Delta \myvar{r} =$ $(0,0,r_z, 0,0,r_\psi)$, where $r_z \in \mathbb{R}$ denotes a translation in the Z-axis and $r_\psi \in \mathbb{R}$ denotes a rotation about the Z-axis.

The objects used to conduct the demonstration are a 3D-printed sphere of radius $0.0375$ m and a 3D-printed cube of length $0.060$ m (see Figure \ref{fig:objects}). The sphere has a hollow interior to adjust its mass between $0.09$ kg and $0.20$ kg. Both objects have high friction coefficients to prevent slip during the manipulation motion. For each case study, the objects are first placed in the robotic hand grasp prior to implementing the control. Once the grasp is formed, the proposed controller is implemented and commanded to the desired reference. The rotation reference setpoint is $r_\psi = 0.6$ $\pm 0.06$ rad ($r_z = 0.0$ $\pm 0.008$ m). The translation reference setpoint is $r_z = 0.04$ $\pm 0.004$ m ($r_\psi = 0.0$ $\pm 0.12$ rad).

\begin{figure}[hbtp]
\centering
	\subcaptionbox{Top View \label{fig.Allegrohandsetup1} }
		{\includegraphics[scale=.23]{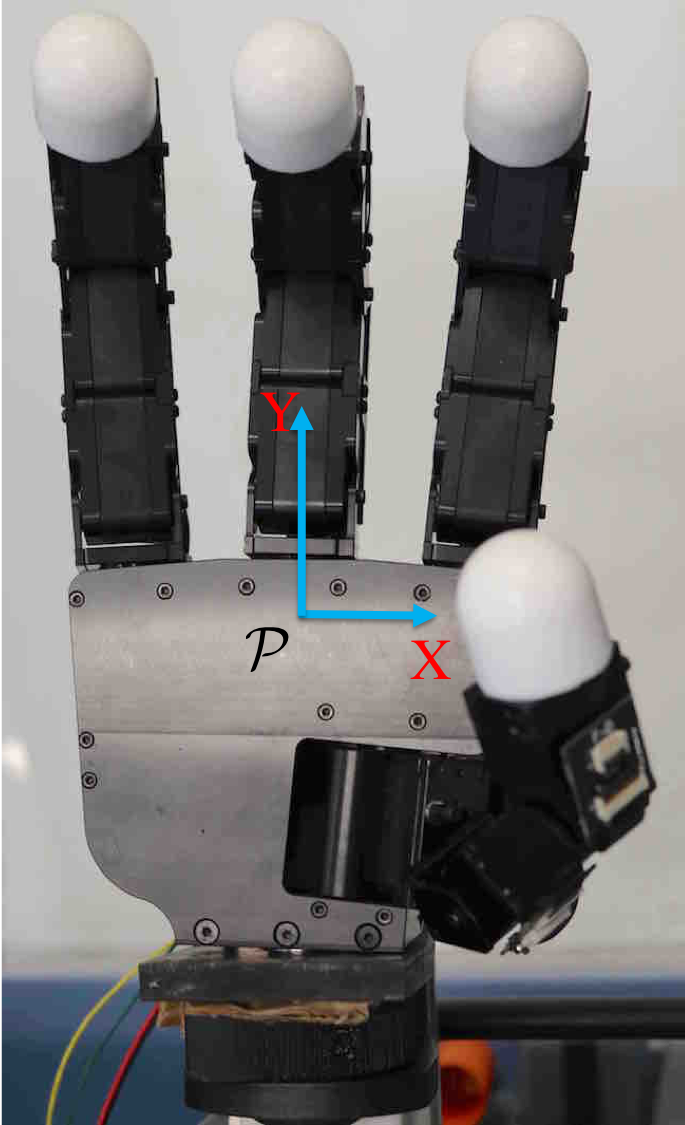}}
	\subcaptionbox{ Side View \label{fig.Allegrohandsetup2} }
		{\includegraphics[scale=.237]{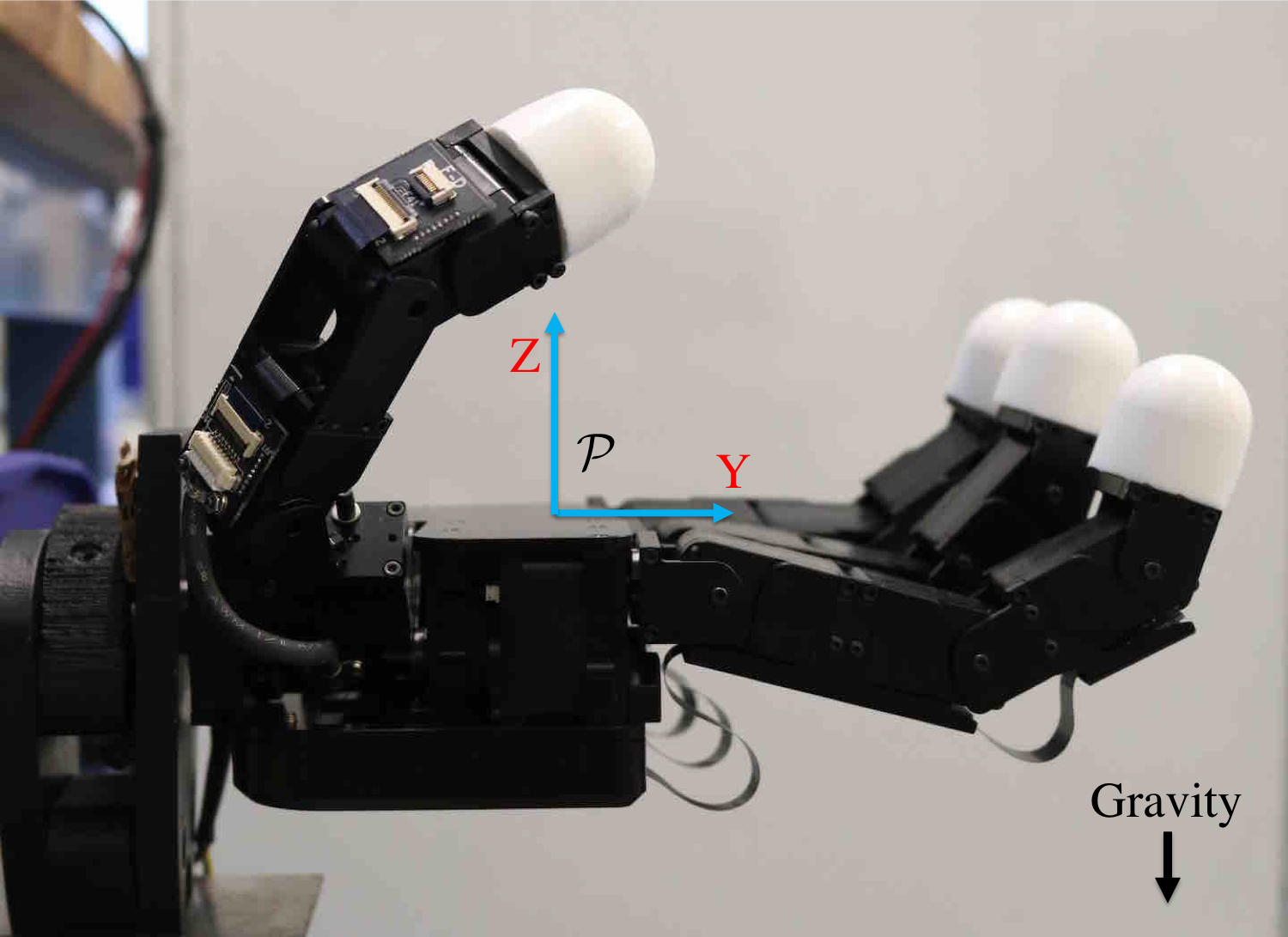}}
	\caption{Allegro Hand setup.}  \label{fig.AllegroHandsetup}
\end{figure}

\begin{figure}[hbtp]
\centering
	\subcaptionbox{Sphere and Cube \label{fig:both objects} }
		{\includegraphics[scale=.093]{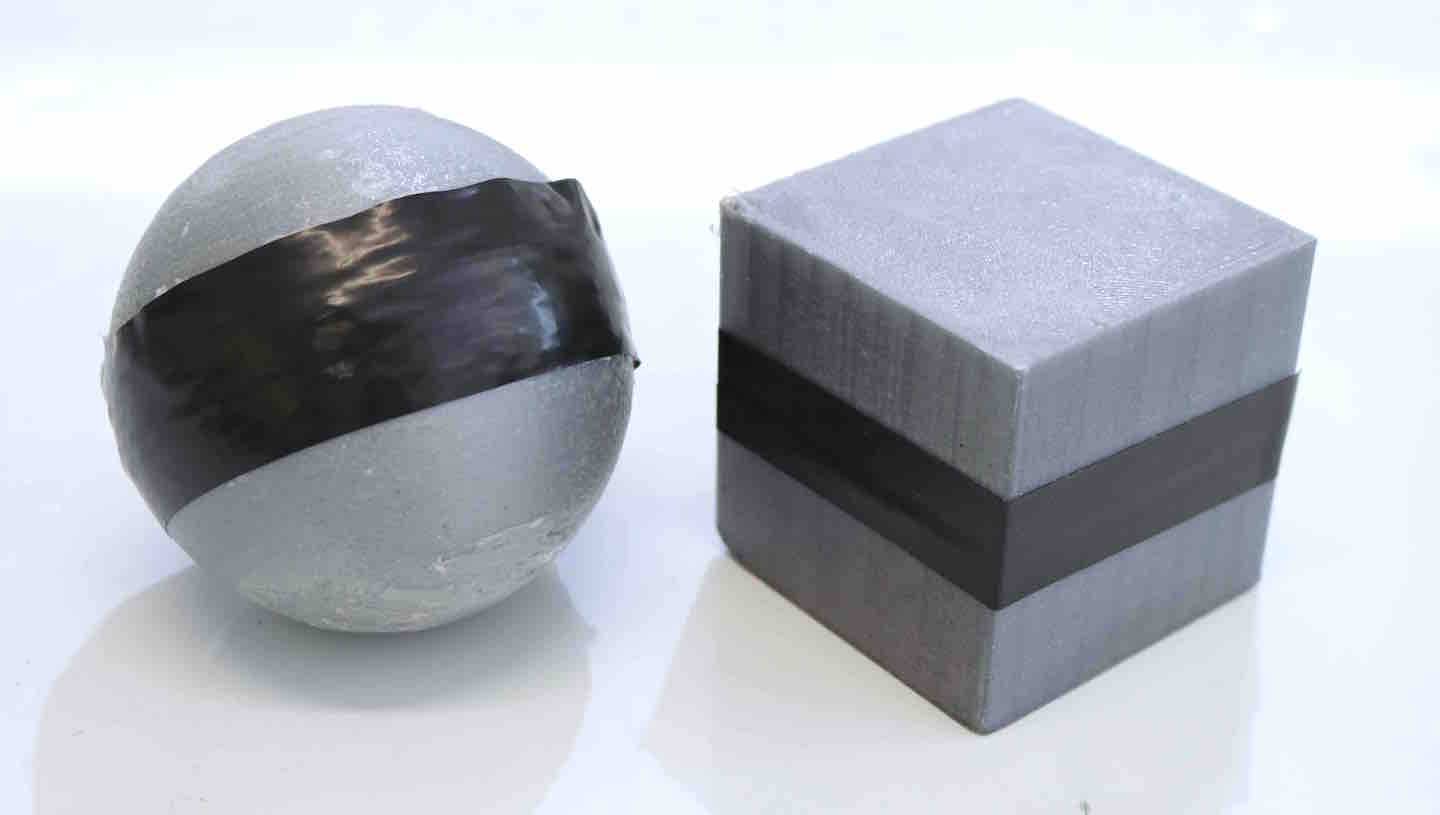}}
	\subcaptionbox{ Hollow sphere with weights \label{fig:open ball} }
		{\includegraphics[scale=.093]{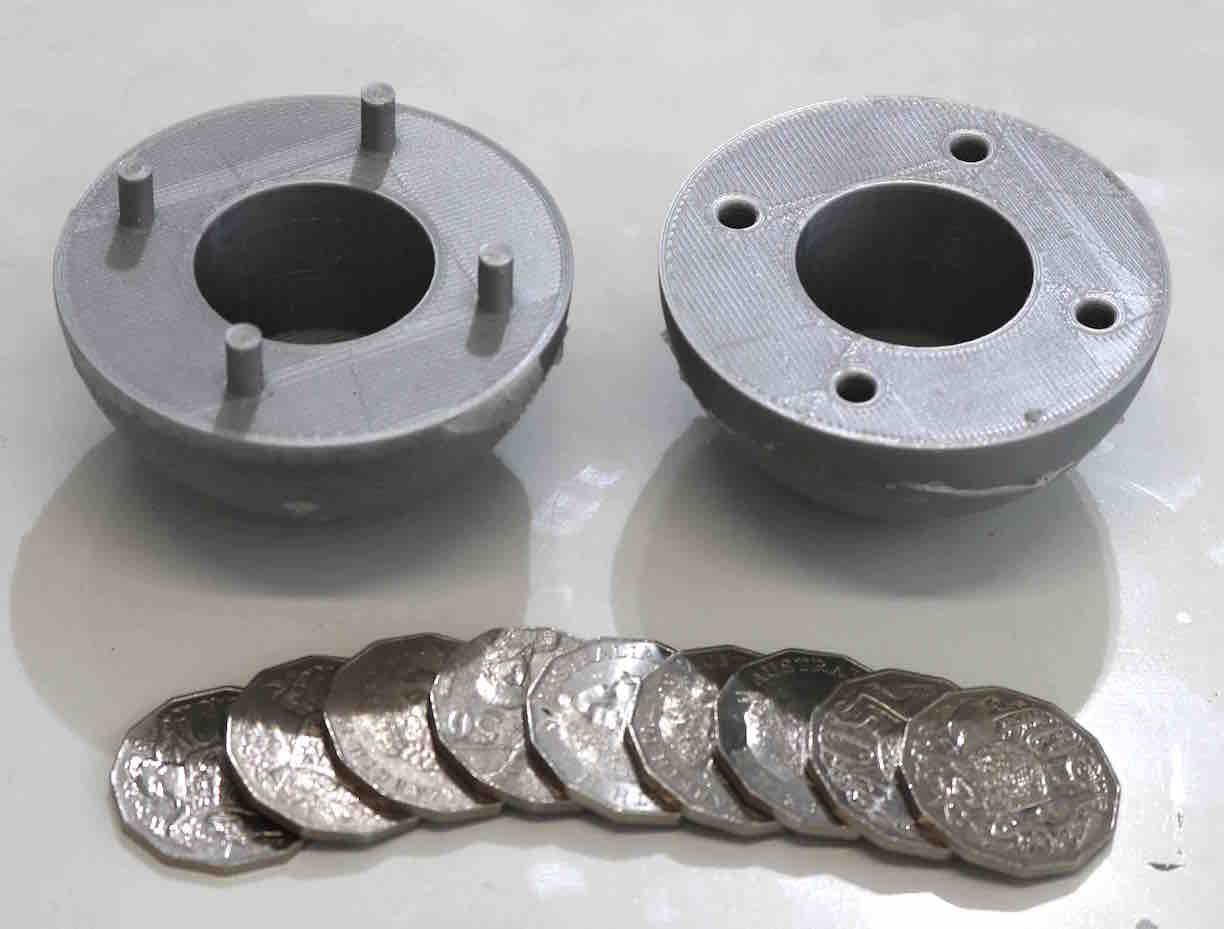}}
	\caption{Objects.}  \label{fig:objects}
\end{figure}

The controller used in the demonstration consists of the proposed control \eqref{eq:proposed control}, \eqref{eq:PID control}, \eqref{eq:centroid uf} with the exogenous disturbance compensators discussed in Section \ref{ssec:exogenous dist comp}. First, a gravity compensation component was augmented to the proposed control to account for the effect of gravity on the hand. Second, a dither-based static friction compensator was also augmented to the control due to the large presence of static friction found in the Allegro Hand. Finally, a nominal object weight compensator is augmented to the control to account for real-world scenarios in which all objects have mass. It is important to emphasize that the nominal object mass of $\hat{m}_o = 0.10$ kg is purposefully offset from the true object masses of $0.20$ kg (for demonstrations 1, 2, and 4) and $0.09$ kg (for the third demonstration) to demonstrate how the proposed control compensates for object mass uncertainty. The implemented controller is:
 \begin{equation}\label{eq:experiment blind grasp control}
\myvar{u} = \apmat{J}_{h}^T \Big((P^T \apmat{G})^\dagger \myvar{u}_m + \myvar{u}_f \Big) + \apvar{\tau}_g(\myvar{q}) + \myvar{d}(t) + \apmat{J}_h^T (P^T \apmat{G})^\dagger \hat{m}_o \myvar{g}
\end{equation}
where $\apvar{\tau}_g(\myvar{q}) \in \mathbb{R}^m$ is the approximate torque induced by gravity acting on the hand, and $\myvar{d}(t)$ is the dither signal added to compensate for static friction, and $\myvar{g} \in \mathbb{R}^3$ is the inertial gravity vector. The parameters associated with the dither signal, $\myvar{d}$, are listed in Table \ref{table:dither parameters} of the Appendix. Note the dither signal and integrator of the control \eqref{eq:experiment blind grasp control} is only applied outside of the prescribed tolerance for the setpoint reference.

The PID gains of the proposed control are:
\begin{eqnarray*}
K_p & = & \text{diag}[(500, 500, 500, 0.28, 0.28, 0.28)] \\
K_i & = & \text{diag}[(50, 50, 50, 0.6, 0.6, 0.6)] \\
K_d & = & \text{diag}[(0.008, 0.008, 0.008, 0.16, 0.16, 0.16)]
\end{eqnarray*}

\begin{figure}[ht]
\centering
	\subcaptionbox{Initial configuration \label{fig.initial sphere} }
		{\includegraphics[scale=.0951]{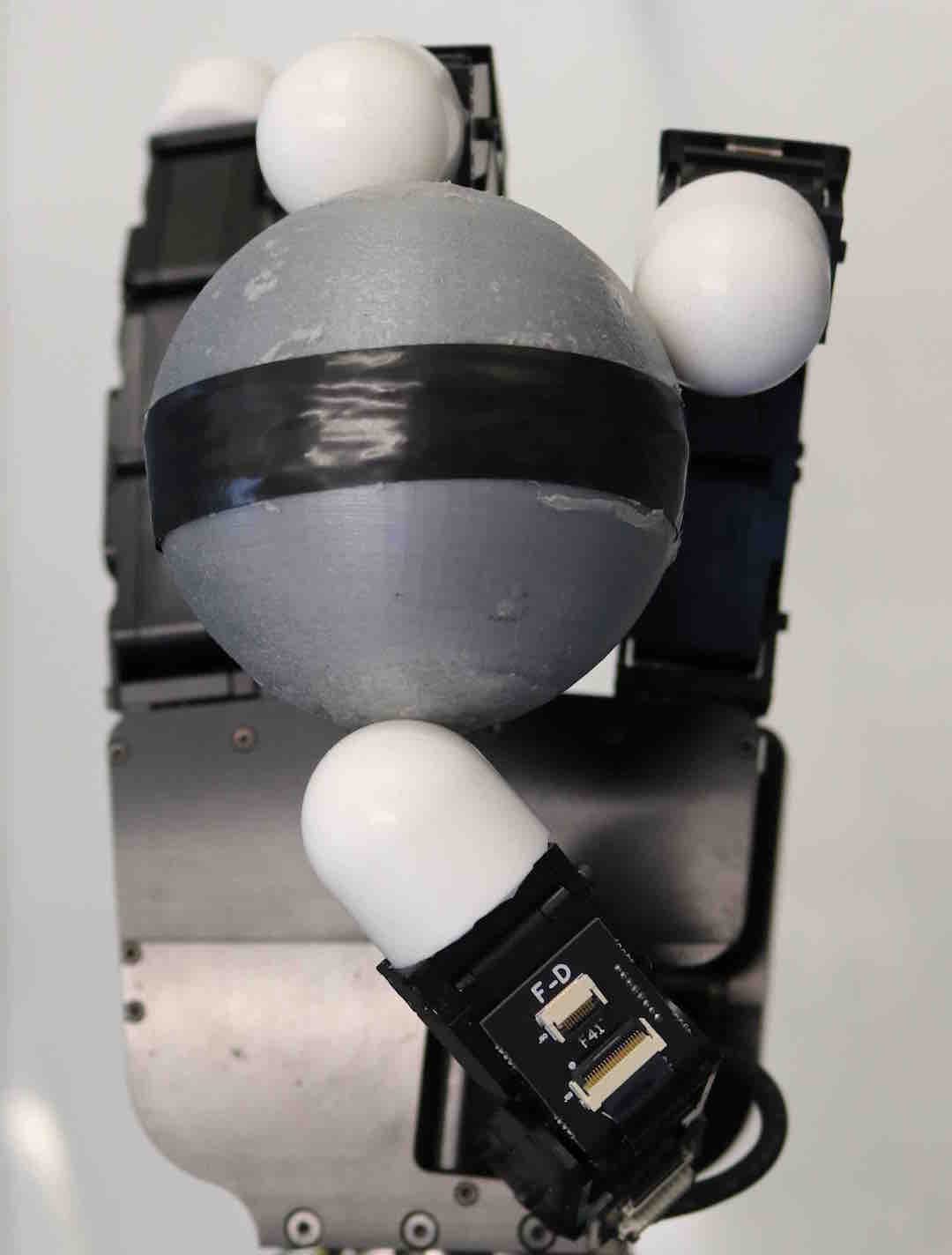}}
	\subcaptionbox{ Final configuration \label{fig.final sphere} }
		{\includegraphics[scale=.09]{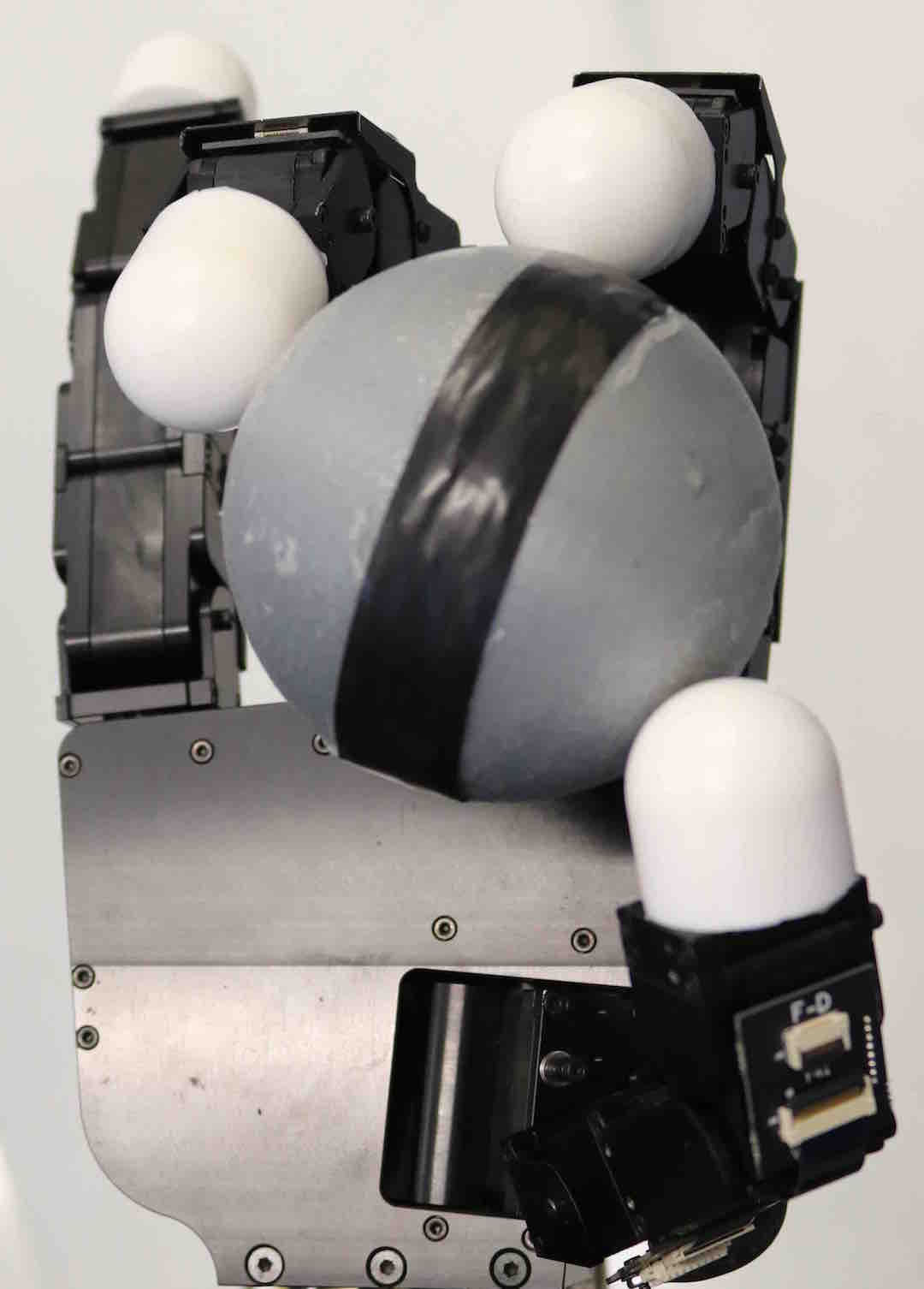}}
	\subcaptionbox{Position error.  \label{fig:cs1_position} }
		{\includegraphics[scale=.222]{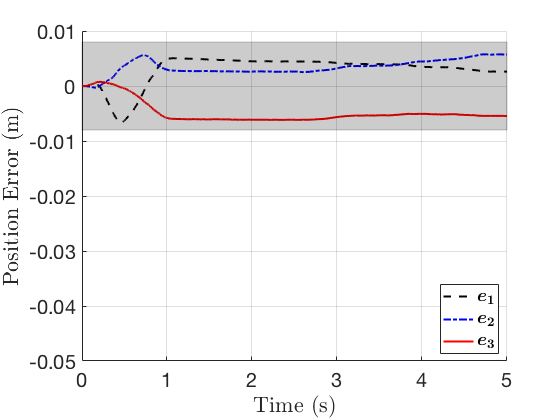}}
	\subcaptionbox{Orientation error.  \label{fig:cs1_orientation} }
		{\includegraphics[scale=.222]{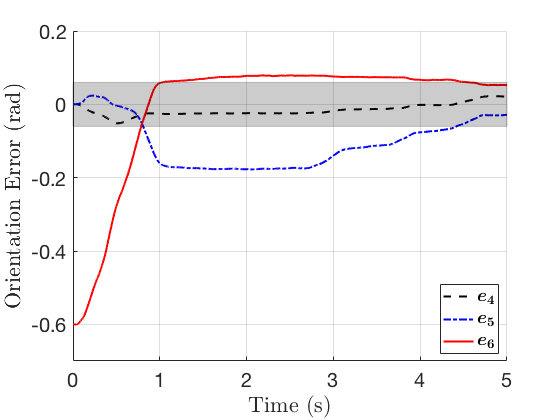}}
	\subcaptionbox{Control torque including dither signal (dither is shut off at 4.56 s when tolerance is reached).  \label{fig:cs1_torque} }
		{\includegraphics[scale=.222]{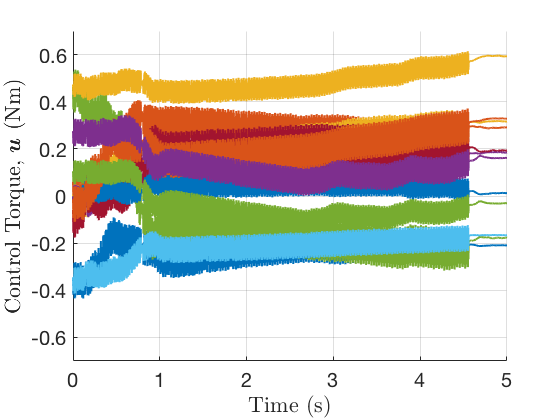}}
	\caption{Demonstration 1: Rotation of $0.20$ kg sphere to $0.6$ rad.}  \label{fig:cs1}
\end{figure}

Figure \ref{fig:cs1} shows the response of the system for the first demonstration, where the spherical object of mass $0.20$ kg is rotated. Figure \ref{fig:cs1_orientation} shows the orientation error, which depicts the proposed controller rotating the object to $r_\psi$ within the prescribed tolerance (depicted in gray). Figure \ref{fig:cs1_position} shows the position error during this manipulation motion. The plot shows that the controller also maintains object initial position, within tolerance, during the rotation maneuver. Figure \ref{fig:cs1_torque} shows the applied control torque to achieve the manipulation command. The plot shows the superposition of the proposed control with the dither signal applied by all motors. Note that during the manipulation, static friction causes minor halting behavior which is seen between $t = 1$s and $t = 2.9$s for $e_5$. The use of the dither signal mitigates this behavior by consistently vibrating each joint. The results show that the proposed control is able to compensate for the discrepancy between the nominal and actual object mass, and perform the desired manipulation command.

\begin{figure}[ht]
\centering
	\subcaptionbox{Initial configuration \label{fig.initial cube} }
		{\includegraphics[scale=.19]{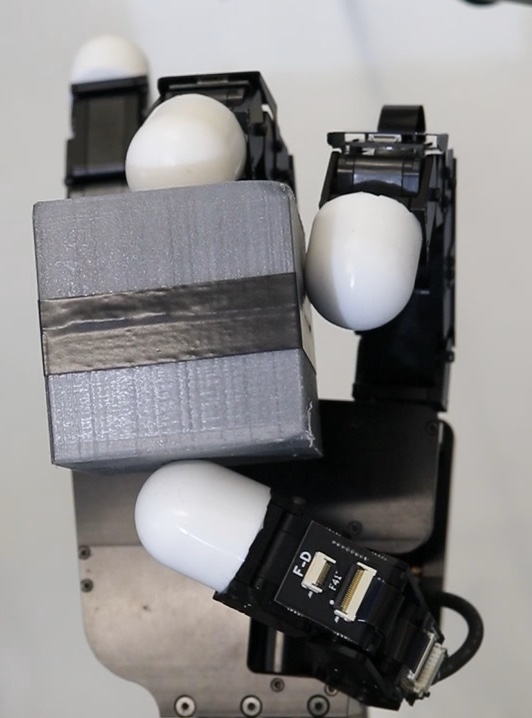}}
	\subcaptionbox{ Final configuration \label{fig.final cube} }
		{\includegraphics[scale=.19]{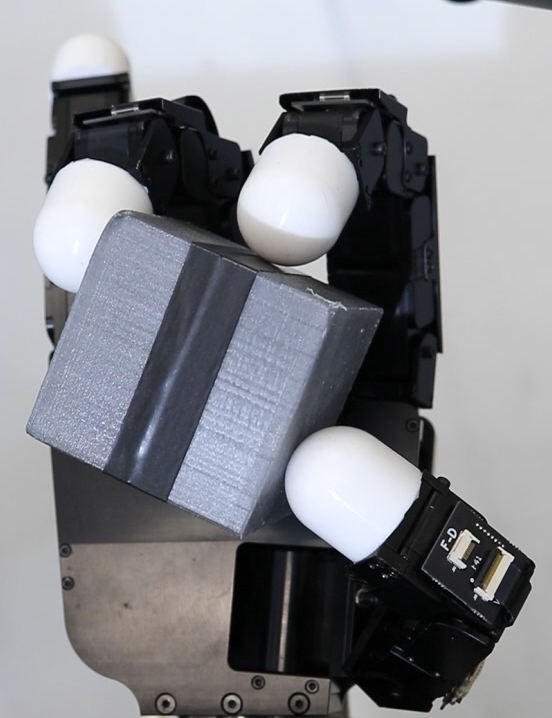}}
	\subcaptionbox{Position error.  \label{fig:cs2_position} }
		{\includegraphics[scale=.222]{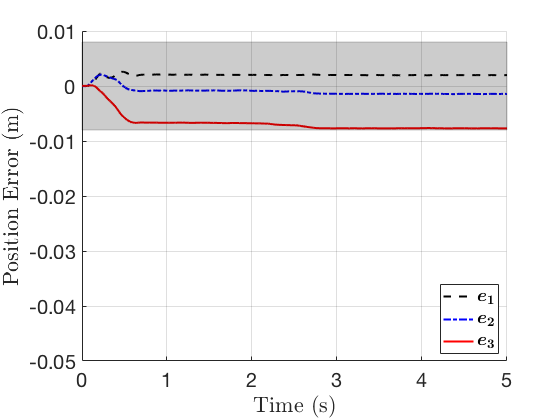}}
	\subcaptionbox{Orientation error.  \label{fig:cs2_orientation} }
		{\includegraphics[scale=.222]{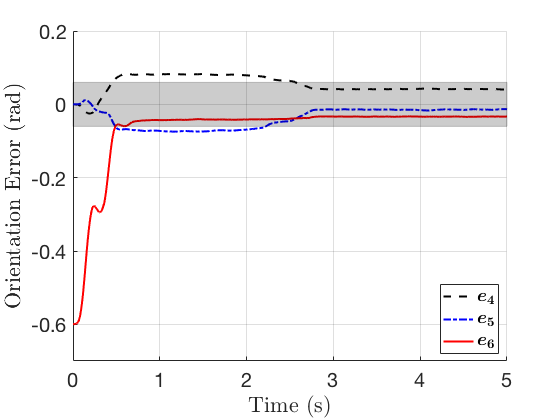}}
	\subcaptionbox{Control torque including dither signal (dither is shut off at 2.57 s when tolerance is reached).  \label{fig:cs2_torque} }
		{\includegraphics[scale=.222]{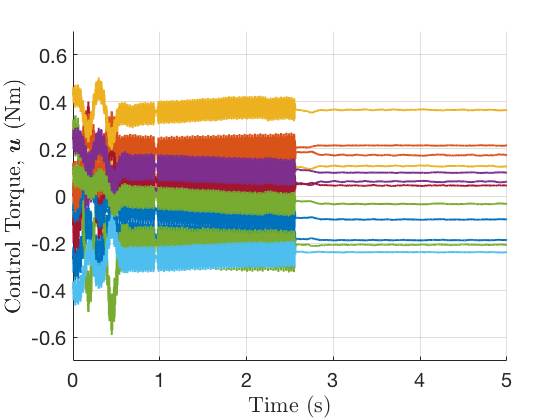}}
	\caption{Demonstration 2: Rotation of $0.20$ kg cube to $0.6$ rad.}  \label{fig:cs2}
\end{figure}

Figure \ref{fig:cs2} shows the response of the system for the second demonstration. In the second demonstration, the cube object is rotated with the same mass of $0.20$ kg used in the first demonstration. Figure \ref{fig:cs2_orientation} shows that the controller is able to rotate the cube to the desired rotation command. Figure \ref{fig:cs2_position} shows that whilst rotating the object, the controller maintains the same initial position, within tolerance. This shows that the same control from the first demonstration is able to rotate an object of different shape to the desired reference.

\begin{figure}[ht]
\centering
	\subcaptionbox{Initial configuration \label{fig.demo3 initial} }
		{\includegraphics[scale=.17]{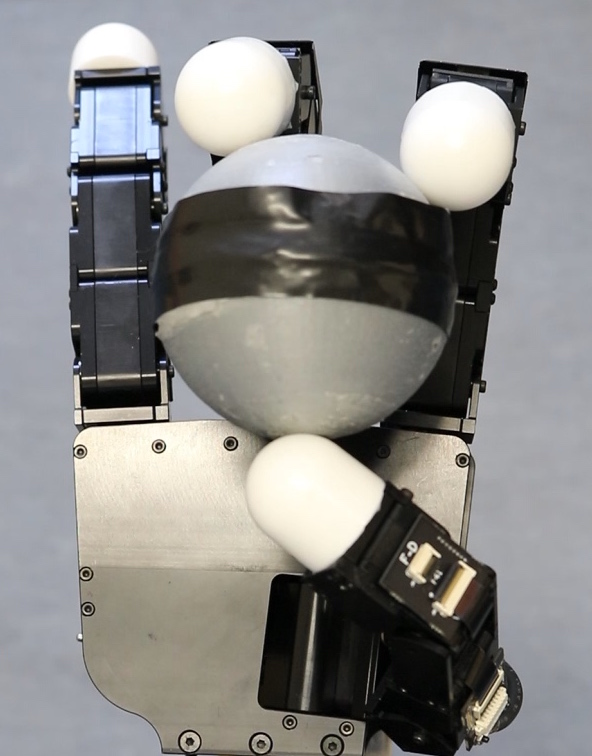}}
	\subcaptionbox{ Final configuration \label{fig.demo3 final} }
		{\includegraphics[scale=.173]{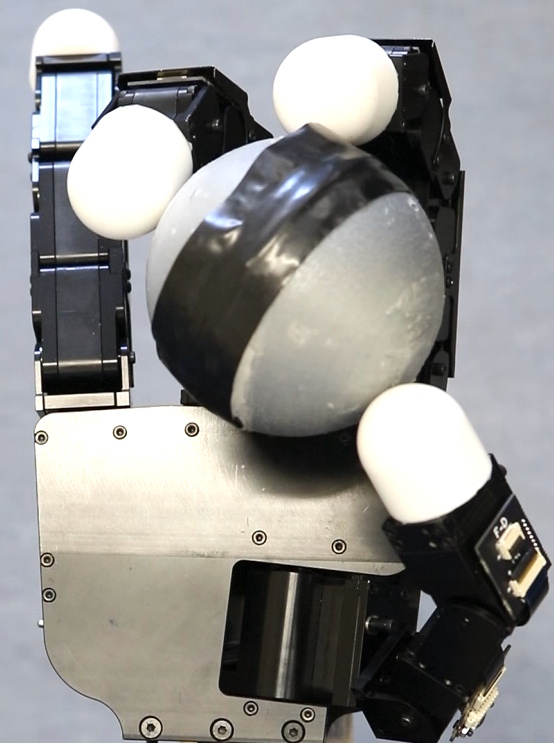}}
	\subcaptionbox{Position error. \label{fig:cs3_position} }
		{\includegraphics[scale=.222]{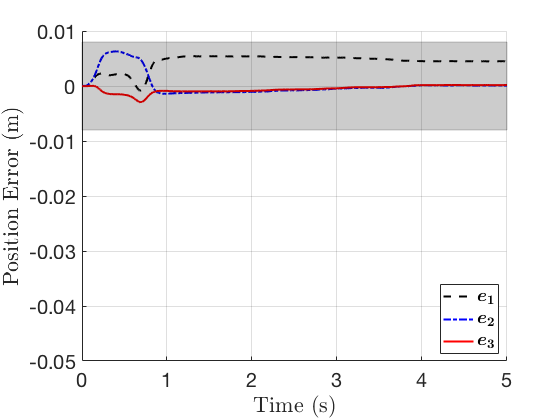}}
	\subcaptionbox{Orientation error.  \label{fig:cs3_orientation} }
		{\includegraphics[scale=.222]{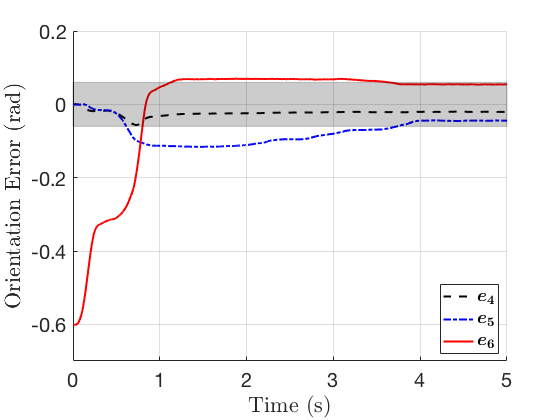}}
	\subcaptionbox{Control torque including dither signal (dither shut off at 3.74 s when tolerance is reached).  \label{fig:cs3_torque} }
		{\includegraphics[scale=.222]{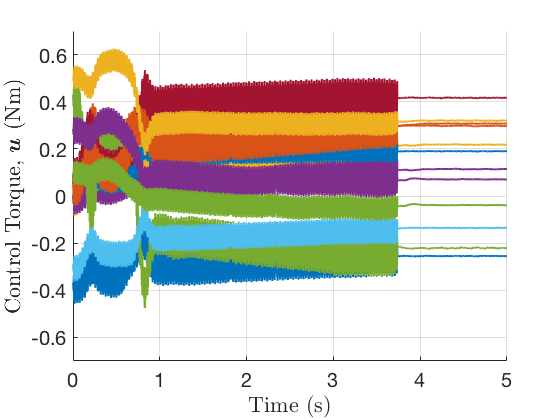}}
	\caption{Demonstration 3: Rotation of $0.09$ kg sphere to $0.6$ rad.}  \label{fig:cs3}
\end{figure}

Figure \ref{fig:cs3} shows the response of the system for the third demonstration, in which a lighter spherical object ($0.09$ kg) is rotated to the same reference used in the previous demonstrations. Figure \ref{fig:cs3_orientation} shows the orientation error converging within the desired reference tolerances. Similarly, Figure \ref{fig:cs3_position} shows the position error converging within the desired reference tolerances. This shows the ability of the control to manipulate objects of different masses.

\begin{figure}[ht]
\centering
	\subcaptionbox{Initial configuration \label{fig.demo 4 initial sphere} }
		{\includegraphics[scale=.116]{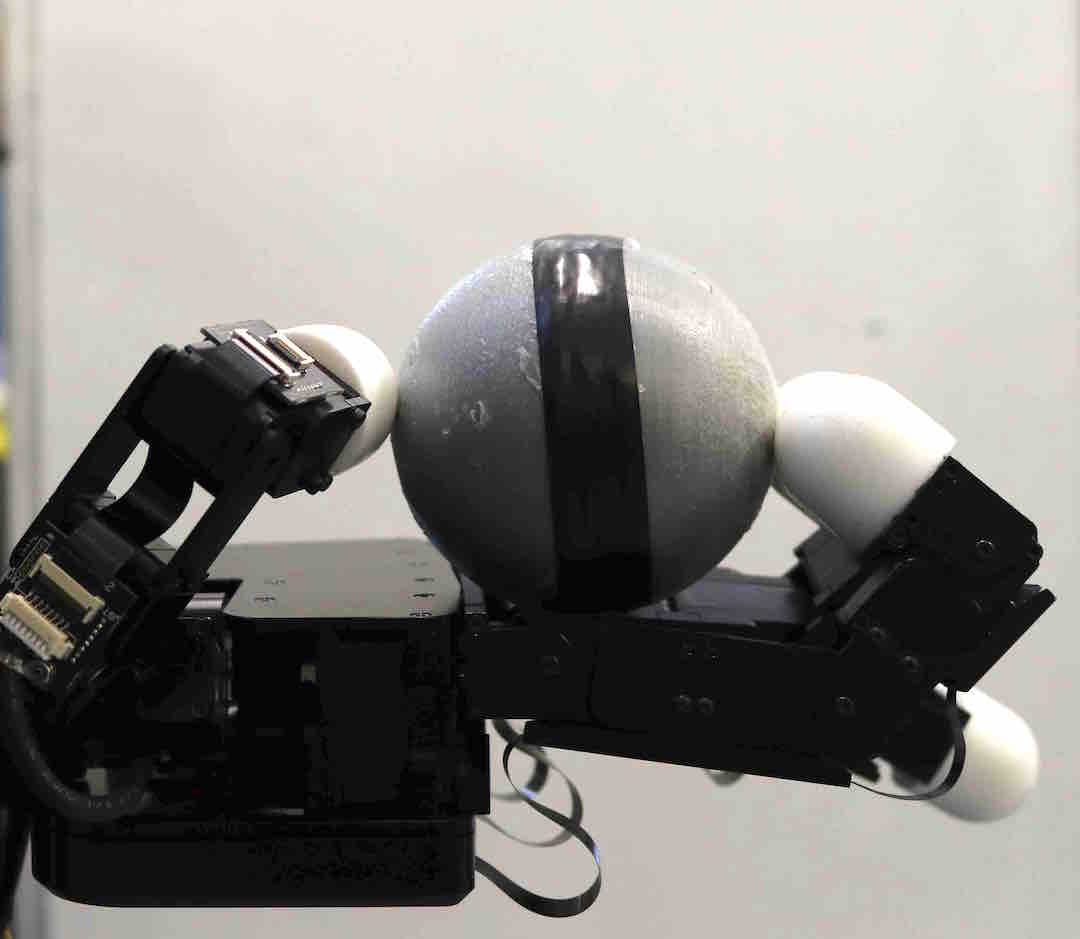}}
	\subcaptionbox{ Final configuration \label{fig.demo 4 final sphere} }
		{\includegraphics[scale=.1063]{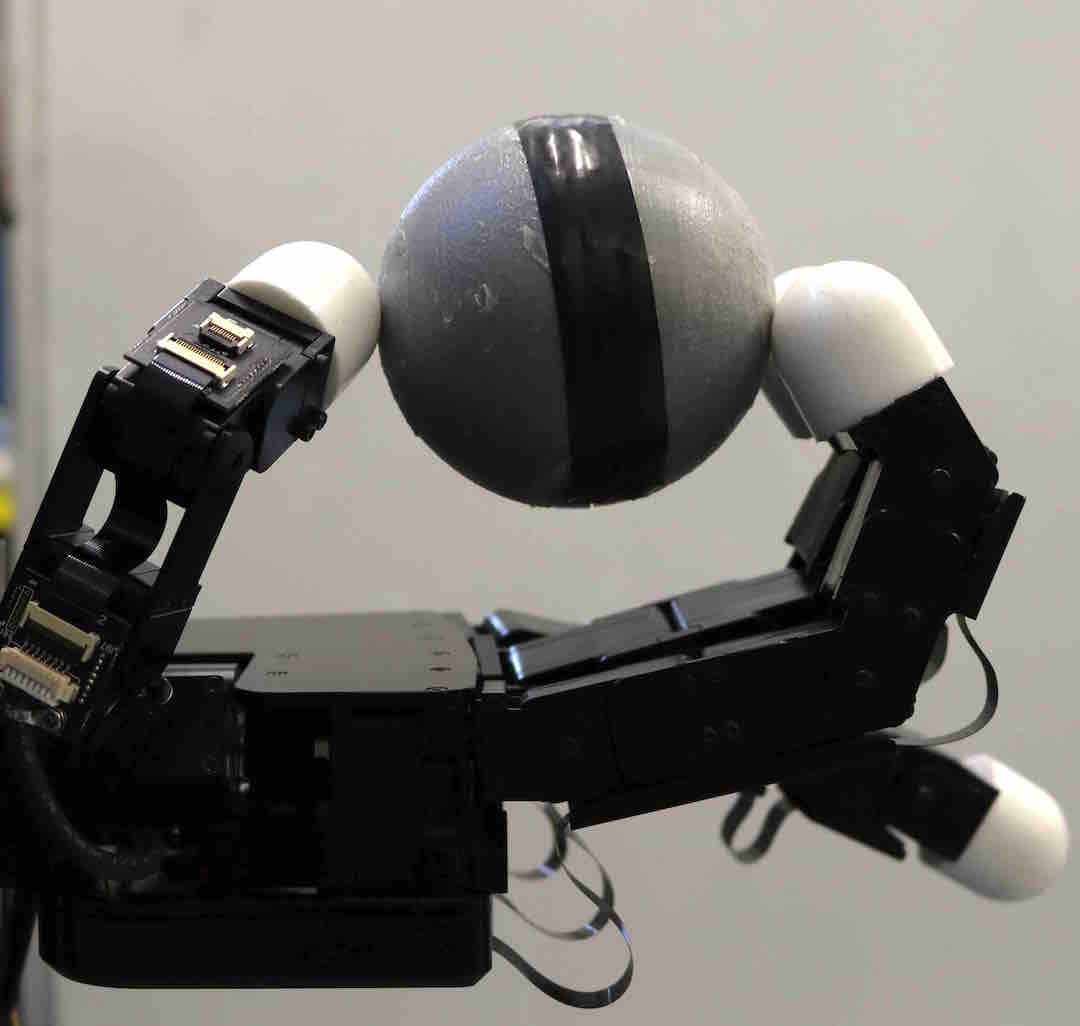}}
	\subcaptionbox{Position error.  \label{fig:cs4_position} }
		{\includegraphics[scale=.222]{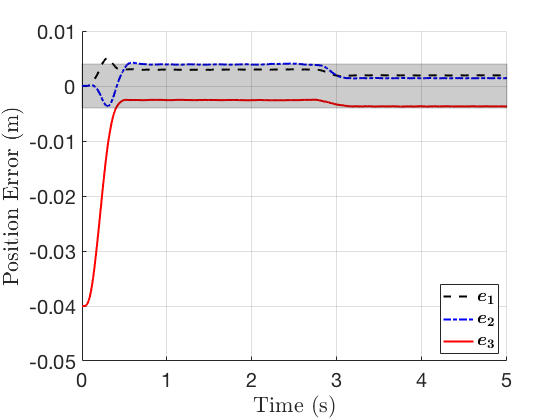}}
	\subcaptionbox{Orientation error.  \label{fig:cs4_orientation} }
		{\includegraphics[scale=.222]{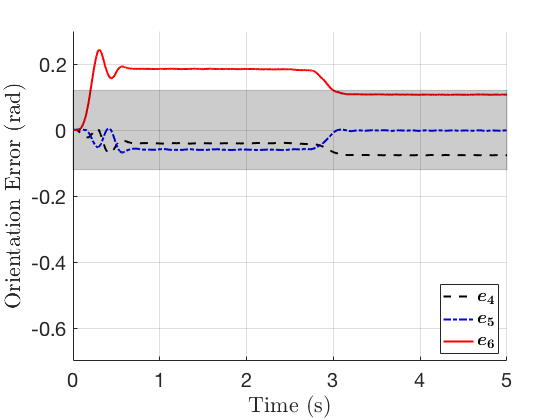}}
	\subcaptionbox{Control torque including dither signal (dither is shut off at 3.00 s when tolerance is reached).   \label{fig:cs4_torque} }
		{\includegraphics[scale=.222]{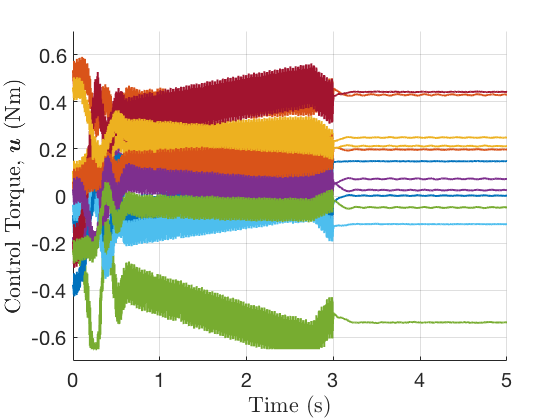}}
	\caption{Demonstration 4: Translation of $0.20$ kg sphere to $0.04$ m.}  \label{fig:cs4}
\end{figure}

Figure \ref{fig:cs4} shows the final demonstration, in which the $0.20$ kg spherical object is translated. Figure \ref{fig:cs4_position} shows the position error as the robotic hand pushes the object along the $Z$-direction to the reference of $0.04$ m. Figure \ref{fig:cs4_orientation} shows that the controller maintains the initial orientation of the object, within tolerance, during the translation maneuver. Note in Figure \ref{fig:cs4_torque} minor motor saturation occurs as additional torque is required to act against gravity to move the object. This final demonstration shows that the same control is capable of performing both rotation and translation commands despite mass uncertainty and without re-tuning of the controller gains.

The demonstrations highlight the robustness of the proposed controller in manipulating objects of different mass and shape using only $\myvar{q}$, $\myvardot{q}$ measurements. The results validate the stability analysis of Section \ref{sec:proposed solution} which considers uncertainties in $M_{ho}$, $C_{ho}$, $\myvar{w}_e$, $\myvar{\tau}_e$, $G$, and $J_h$, in addition to rolling effects.

\section{Conclusion} \label{sec: conclusion}

In this paper a robust control law for in-hand manipulation was proposed to handle disturbances that manifest from unknown/uncertain hand-object model, external wrenches, and contact locations. The proposed control only requires joint angle measurements for applicability to tactile-based blind grasping as well as vision-based systems. The stability analysis provides semi-global asymptotic and exponential stability about a set pose reference point, and tuning guidelines. Simulation and hardware results demonstrate the efficacy of the proposed control law. 

\section*{Appendix}

\begin{table}[H]\hspace*{-1cm}
\centering
\caption{Dither Signal Parameters ($d_j = a_j \sin(2\pi f t) + b_j, j\in [1,m]$, $f = 150$ Hz)} \label{table:dither parameters}
\begin{tabular}{c|cccc}    \toprule
Index: & \emph{Joint 1} & \emph{Joint 2} & \emph{Joint 3} & \emph{Joint 4} \\\midrule
$a_j$   & $0.0990$ & $0.0873$ &  $0.0936$  & $0.0828$ \\
$b_j$    & $-0.0110$ & $-0.0255 $ & $-0.0260$ & $-0.0140$  \\
\midrule
Middle: & \emph{Joint 1} & \emph{Joint 2} & \emph{Joint 3} & \emph{Joint 4} \\\midrule
$a_j$   & $0.0981$ & $0.0639$ &  $0.0747$  &$ 0.0648$ \\
$b_j$    & $-0.0175 $ & $-0.0045 $ & $-0.0065$ & $-0.0040$  \\
\midrule
Thumb: & \emph{Joint 1} & \emph{Joint 2} & \emph{Joint 3} & \emph{Joint 4} \\\midrule
$a_j$   & $0.1152$ & $0.0720$ &  $0.1224$  & $0.0648$\\
$b_j$    & $-0.0250 $ & $-0.0130 $ & $-0.0180$ & $0.0010$  \\
\bottomrule
\end{tabular}
\end{table}

\section*{Acknowledgement}
This work is supported by the Valma Angliss Trust.

\section*{References}

\bibliographystyle{elsarticle-num} 
\bibliography{IEEEabrv,ShawCortez_JournalCEP}

\end{document}